\setlist[enumerate]{topsep=0pt,itemsep=-1ex,partopsep=1ex,parsep=1ex,leftmargin=*}
\newcommand{\algo}{\textsc{U2}}
\newtheorem{lemma}{Lemma}[section]
\newtheorem{theorem}{Theorem}[section]
\newtheorem*{theorem*}{Theorem}
\newtheorem{corollary}{Corollary}[section]
\theoremstyle{definition}
\newtheorem{assumption}{Assumption}[section]
\newtheorem{remark}{Remark}[section]
\renewcommand\ind[1]{\ensuremath{\mathds{1}\left[#1\right]}}
\newcommand{\interior}[1]{%
  {\kern0pt#1}^{\mathrm{o}}%
}
\definecolor{C0}{HTML}{1F77B4}
\definecolor{C1}{HTML}{FF7F0E}
\definecolor{C2}{HTML}{2ca02c}
\definecolor{C3}{HTML}{d62728}
\definecolor{C4}{HTML}{9467bd}
\definecolor{C5}{HTML}{8c564b}
\definecolor{C6}{HTML}{e377c2}
\definecolor{C7}{HTML}{7F7F7F}
\definecolor{C8}{HTML}{bcbd22}
\definecolor{C9}{HTML}{17BECF}
\newcommand{\cS}{\mathcal{S}}
\newcommand{\cM}{{\mathcal{M}}}
\newcommand{\cR}{{\mathcal{R}}}
\newcommand{\cP}{{\mathcal{P}}}
\newcommand{\subopt}{\textsc{SubOpt}}
\newcommand{\cost}{\textsc{Cost}}
\newcommand{\targetpi}{\ensuremath{\pi_{\dagger}}}
\newcommand{\neighbor}[3]{#1\{#2;#3\}}
\newcommand{\ermu}{\ensuremath{e_\mu }}
\newcommand{\erq}{\ensuremath{e_Q }}
\newcommand{\erqhat}{\ensuremath{\widehat{e}_Q }}
\newcommand{\nmin}{\ensuremath{N_{\textnormal{min}} }}
\newcommand{\rrange}{\ensuremath{R_\text{range}} }
\newcommand{\rrangehat}{\ensuremath{\widehat{R}_\text{range}} }
\newcommand{\mumin}{\ensuremath{\mu_\text{min}} }
\newcommand{\pos}[1]{\left[#1\right]^+}
\newcommand{\expct}[1]{\mathbb{E}\left[#1\right]}
\newcommand{\bR}{\mathbb{R}}
\icmltitlerunning{Reward Poisoning in Reinforcement Learning: Attacks Against Unknown Learners in Unknown Environments}
\begin{document}

\twocolumn[
\icmltitle{Reward Poisoning in Reinforcement Learning:\\Attacks Against Unknown Learners in Unknown Environments}




\icmlsetsymbol{equal}{*}

\begin{icmlauthorlist}
\icmlauthor{Amin Rakhsha}{toronto,equal}
\icmlauthor{Xuezhou Zhang}{uw,equal}
\icmlauthor{Xiaojin Zhu}{uw}
\icmlauthor{Adish Singla}{mpi}
\end{icmlauthorlist}

\icmlaffiliation{toronto}{University of Toronto, Vector Institute.}
\icmlaffiliation{uw}{University of Wisconsin-Madison.}
\icmlaffiliation{mpi}{Max Planck Institute for Software Systems (MPI-SWS)}

\icmlcorrespondingauthor{Amin Rakhsha}{aminr@cs.toronto.edu}
\icmlcorrespondingauthor{Xuezhou Zhang}{xzhang784@wisc.edu}

\icmlkeywords{Machine Learning, ICML}

\vskip 0.3in
]



\printAffiliationsAndNotice{\icmlEqualContribution} 

\newtoggle{longversion}
\settoggle{longversion}{true}

\begin{abstract}
We study black-box reward poisoning attacks against reinforcement learning (RL), in which an adversary aims to manipulate the rewards to mislead a sequence of RL agents with \emph{unknown} algorithms to learn a nefarious policy in an environment \emph{unknown} to the adversary a priori. That is, our attack makes minimum assumptions on the prior knowledge of the adversary: it has no initial knowledge of the environment or the learner, and neither does it observe the learner's internal mechanism except for its performed actions. We design a novel black-box attack, \algo, that can provably achieve a near-matching performance to the state-of-the-art white-box attack, demonstrating the feasibility of reward poisoning even in the most challenging black-box setting.
\end{abstract}

\vspace{-6mm}
\section{Introduction}
\vspace{-1mm}
\emph{Reward poisoning} refers to an adversarial attack against reinforcement learning (RL) where the adversary manipulates the rewards in order to mislead the RL agent's learning process. It has been considered by many as a realistic threat against modern RL applications. Many real-world applications---such as recommendation systems \cite{zhao2018deep, chen2019top}, virtual/conversational assistants \cite{dhingra2016towards,li2016deep}---extract reward signals directly from user feedback and are thus prone to adversarial corruption. Reward poisoning has recently been study in various settings~\cite{zhang2008value,zhang2009policy,ma2018data,jun2018adversarial, peltola2019machine, altschuler2019best, liu2019data,ma2019policy, huang2019deceptive, rakhsha2020policy,rakhsha2020policy-jmlr,zhang2020adaptive}, but most of the prior work makes strong assumptions on the knowledge of the adversary. It is often assumed that the adversary has full knowledge of the environment (i.e., true rewards/transitions) or the agent's learning algorithm or both. Under such assumptions, attack strategies have been proposed that can mislead the agent to learn a nefarious policy with minimal perturbations to the rewards. 

However, in many applications, the adversary has limited knowledge about the environment or the agent's learning algorithm. For example, on e-commerce platforms, the adversary may take the form of a malicious seller who wants to mislead the platform's ranking system to promote their product by posting fake purchases, reviews, or comments. In such scenarios, the malicious sellers often have very limited knowledge about the dynamics of the market or the particular ranking algorithm currently used by the platform. In this setting, the attack strategies developed by prior works cannot be applied, and therefore one can argue that the security threats anticipated in these works might be pessimistic. 

To evaluate the security threat against RL agents in more realistic scenarios, in this work, we investigate the \emph{unknown-unknown} attack setting, where we assume that the adversary has \emph{no knowledge} about the environment or the agents' learning algorithm. To the best of our knowledge, we are the first to study reward-poisoning attacks against RL in this setting.
Our contributions are three folds.
\begin{enumerate}
    \item We develop a black-box attack strategy, \algo{}, that can attack unknown RL agents (learners) in an unknown environment. \algo{} operates without any prior knowledge of the environment or the learners and only requires that learners follow a no-regret RL algorithm.
    \vspace{1mm}
    \item We show that surprisingly, with appropriate choice of hyperparameters, \algo{} can achieve an attack cost not much worse than the optimal white-box attack \cite{rakhsha2020policy,rakhsha2020policy-jmlr}.
    \vspace{1mm}
    \item As part of the \algo{} attack, we develop an exploration subroutine that is of independent interest. This subroutine can turn any no-regret RL algorithm to apply to the reward-free/task-agnostic \cite{jin2020reward,zhang2020task} RL settings, where the learner is manipulated to explore the whole state-action space and allow data to be efficiently collected for arbitrary down-stream tasks (in our case the poisoning attack task).
\end{enumerate}
\section{Related Work}
\paragraph{No-regret RL algorithms.}
There is a long history of research on no-regret RL algorithms, and in the tabular MDP case, this problem is now considered well-understood.
For example, in the episodic setting, the UCRL2 algorithm~\cite{auer2009near} achieves $O(\sqrt{H^4S^2AT})$ regret, where $H$ is the episode length, $S$ is the state space size, $A$ is the action space size, and $T$ is the total number of steps. The UCBVI algorithm \cite{azar2017minimax, dann2017unifying} achieves the optimal $O(\sqrt{H^2SAT})$ regret matching the lower bound \cite{osband2016lower, dann2015sample}. More recently, model-free methods \cite{jin2018q} and policy-based methods \cite{cai2020provably} have all been shown to be able to achieve the same optimal regret bound. In our work, we assume that the learner is implementing a no-regret algorithm, i.e. the regret scales sublinearly with $T$. We will show that the learning efficiency of the learner will ``backfire'' on itself in presence of an attack.

\paragraph{Test-time attacks against RL.}
Earlier work on adversarial attacks against RL studied \textit{test-time} attacks, where an adversary aims to manipulate the perceived state of the environment to mislead a fixed and deployed RL policy to perform an incorrect action~\cite{huang2017adversarial,lin2017tactics,kos2017delving, behzadan2017vulnerability}.
For example, 
in Atari games, the attacker can make small pixel perturbation to a frame, similar to adversarial attacks on image classification ~\cite{goodfellow2014explaining}), to induce an action $\pi(s^\dagger_t) \neq \pi(s_t)$.
Although test-time attacks can severely impact the performance of a deployed and fixed policy $\pi$, they do not modify $\pi$ itself, and thus the adversarial impact will disappear as soon as the attack terminates. On the other hand, poisoning attacks are \textit{training-time} attacks that aim at changing the learned policy and thus have a long-term effect.

\paragraph{Reward poisoning.} Reward poisoning against RL has been first studied in \emph{batch RL}~\cite{zhang2008value,zhang2009policy,ma2019policy} where rewards are stored in a pre-collected data set by some behavior policy, and the attacker comes in to modify the batch data. Because all data are available to the attacker at once, the batch attack problem is somewhat easier.
Our paper instead focuses on the \textit{online} RL attack setting where reward poisoning must be done on the fly. 
In online settings, reward poisoning is first introduced and studied in multi-armed bandits~\cite{ma2018data,jun2018adversarial,peltola2019machine,altschuler2019best,liu2019data}, where the authors show that adversarially perturbed reward can mislead standard bandit algorithms to pull a suboptimal arm or suffer large regret. 

\cite{huang2019deceptive, rakhsha2020policy,rakhsha2020policy-jmlr,zhang2020adaptive} studied online reward poisoning attacks in the white-box setting, where the adversary is assumed to have full knowledge of the MDP or the learning algorithm. Among them, \cite{huang2019deceptive,rakhsha2020policy,rakhsha2020policy-jmlr} focus on attacking the reward function itself, in which case the adversarial rewards are also functions of state and action, but independent of the learning process.
\cite{zhang2020adaptive} focuses on attacking a Q-learning agent, and presents a more powerful attack that can depend on the RL victim's Q-table $Q_t$. Their analysis shows that such adaptive attacks can be exponentially faster in enforcing the target policy than non-adaptive attacks studied in prior works. In comparison, our work focus on the more challenging black-box setting, and our attack can be applied to any no-regret RL algorithms. Recently, \cite{sun2020vulnerability} empirically studied the problem of black-box poisoning attack against policy-based deep RL algorithms. Their algorithm \textsc{VA2C-P} takes an actor-critic structure and shows strong attack performance against state-of-the-art policy gradient algorithms, such as \textsc{REINFORCE}, A2C, PPO, etc. In comparison, our work provide a more general and theoretically sound black-box attack strategy against any efficient RL algorithms, not just policy gradient algorithms.

\paragraph{Poisoning attacks and teaching.} Poisoning attacks is mathematically equivalent to the formulation of machine teaching with the teacher being the adversary~\cite{goldman1995complexity,zhu2015machine,singla2014near,DBLP:journals/corr/ZhuSingla18,DBLP:conf/nips/ChenSAPY18,mansouri2019preference,peltola2019machine}. A recent line of research has studied robust notions of teaching in settings where the teacher has limited information about the learner's dynamics~\cite{dasgupta2019teaching,DBLP:conf/ijcai/DevidzeMH0S20,cicalese-icml20-teaching-with-limited}, however, these works only consider supervised learning settings.

\looseness-1There have been a number of recent works on teaching an RL agent via providing an optimized curriculum of demonstrations \cite{cakmak2012algorithmic,DBLP:conf/uai/WalshG12,hadfield2016cooperative,DBLP:conf/nips/HaugTS18,DBLP:conf/ijcai/KamalarubanDCS19,DBLP:conf/nips/TschiatschekGHD19,brown2019machine}. However, most of these works have focused on imitation-learning based RL agents who learn from provided demonstrations without any reward feedback \cite{osa2018algorithmic}. Given that we consider RL agents who find policies based on rewards, our work is technically very different from theirs. A recent work of \cite{DBLP:journals/corr/abs-2006-09324} studies the problem of teaching Q-learning algorithm, however, considers the white-box setting.
There is also related literature on changing the behavior of an RL agent via \emph{reward shaping} \cite{ng1999policy,asmuth2008potential}; here the reward function is changed to only speed up the convergence of the learning algorithm while ensuring that the optimal policy in the modified environment is unchanged.

\section{Problem Setup}
We now formalize the problem addressed in this paper.
\subsection{Preliminaries and Definitions}
In this work, we assume that the environment is modeled as an episodic Markov Decision Process (MDP), defined by a tuple $M = (S, A, R, P, d_0, \gamma, H)$, where $S$ is a finite state space, $A$ is a finite action space, $R\colon S\times A \to \mathbb{R}$ is a reward function, $P\colon S \times A \times S \to [0, 1]$ is the transition probability function, $d_0$ is the initial state distribution, and $\gamma \in (0,1)$ is the discounting factor. We further assume that each episode terminates in at most $H$ steps almost surely. To simplify the notation, from here on, we will omit $d_0,\gamma,H$ and denote $M = (S,A,R,P)$; when clear from context, we will abuse the notation and also use $S,A$ to denote the size of the state space and action space respectively.
At the beginning of each episode, the agent starts from a state sampled from the initial state distribution $d_0$.
By taking action $a$ at state $s$, the agent receives a reward with expectation $R(s, a)$ and $\sigma^2$-sub-Gaussian noise for some $\sigma > 1$, and transits to state $s'$ with probability $P(s, a, s')$.

A (deterministic) \textit{policy} is a mapping from states to actions, i.e., $\pi\colon S \to A$. 
We will use the standard state value function $V^\pi_M(s) = \expct{\sum_{\tau=0}^{\infty} \gamma^\tau r_\tau|s_0 = s, \pi}$ and the state-action value function $Q^\pi_M(s, a)= \expct{\sum_{\tau=0}^{\infty} \gamma^\tau r_\tau|s_0 = s, a_0 = a, \pi}$ where the expectations are over the stochasticity in both transition and reward functions.
The optimal value functions are also defined as $Q^*_M(s, a) = \max_\pi Q^\pi_M(s, a)$ and $V^*_M(s) = \max_\pi V^\pi_M(s)$. Given initial state distribution $d_0$, the expected discounted reward $\rho^\pi_M$ of policy $\pi$ is defined as $ \expct{\sum_{\tau=0}^{\infty} \gamma^\tau r_\tau|s_0 \sim d_0, \pi}$ which gives
$\rho^\pi_M = \expct{V^\pi_M(s) | s \sim d_0}$. Policy $\pi^*$ is said to be optimal if $\rho_M^{\pi^*} \ge \rho_M^\pi$ for every $\pi \ne \pi^*$ and $\epsilon$-robust optimal if $\rho_M^{\pi^*} \ge \rho_M^\pi + \epsilon$ also holds. We denote the expected discounted reward of the optimal policy by $\rho^*_M$. A policy $\pi$ is called $\epsilon$-optimal if it is at most $\epsilon$ worse than the optimal policy, i.e. $\rho_M^{\pi} \ge \rho_M^* - \epsilon$, and $\epsilon$-suboptimal otherwise. A step is called $\epsilon$-suboptimal if the action performed is $\epsilon$-suboptimal, i.e.,  $a_\tau \notin \{\pi(s_\tau) : \rho^\pi_M \ge \rho^*_M - \epsilon\}$ (action is not chosen by any $\epsilon$-optimal policy). Let $\mu^\pi$ be the state distribution of policy $\pi$ defined as
$\mu^\pi_M(s) = (1 - \gamma) \sum_{\tau = 0}^{\infty} \gamma^\tau\Pr[s_\tau = s | \pi, d_0]$. Let $\mumin = \min_{s, \pi} \mu^\pi_M(s)$. We assume under any policy $\pi$, all states are visited with a positive probability, i.e. $\mumin > 0$.

For policy $\pi$ and a state-action pair $(s,a)$ we define the neighboring policies of $\pi$ at $(s,a)$ as
\begin{equation}
    \neighbor{\pi}{s}{a}(x) =
 \left\{\begin{array}{lc}
\pi(x) & x \ne s \\
a & x = s
\end{array}\right. 
\end{equation}
\subsection{Attack Problem}
\paragraph{Learners.} In this work, we focus on a \textit{population learning} scenario, in which a sequence of $L$ online RL agents take turns to interact with the environment. Such scenarios are relevant when many learners aim to learn the same (or similar) task, for example, RL agents as auto-pilots for autonomous transportation systems or RL agents as virtual personal assistants that learn independently to adapt to the preferences of their users.

We will consider a setting where each learner interacts with the environment for a total of $T$ steps spread over several episodes, i.e., we index all the time steps over different episodes with $t = 1$ to $T$.  
At step $t$ of learner $l$'s interaction, the learner chooses action $a_t^{(l)}$ from state $s_t^{(l)}$, and
 the environment produces reward $r_t^{(l)}$ and next state $s_{t+1}^{(l)}$. The learner is moved to $s_{t+1}^{(l)}$, but the attacker changes the observed reward from $r_t^{(l)}$ to $r'^{(l)}_t$.
%
 The learners' goal is to maximize their discounted return. In this paper, we make the following assumption on the learners' performance:
\begin{assumption}\label{ass:subopt}
With probability of at least $1-\delta$, the learner performs $\epsilon$-suboptimal actions at most $\subopt(T, \epsilon, \delta)$ times where $\subopt$ is sublinear in $T$. Moreover, for some $\alpha, \beta > 0$, the learner is able to find an $\alpha$-optimal policy in $T$ steps with probability of at least $1 - \beta$.
\end{assumption}

\begin{remark}
Assumption \ref{ass:subopt} is satisfied by most sample-efficient RL algorithms in the literature, such as UCRL2 \cite{auer2009near}, UCBVI \cite{azar2017minimax, dann2017unifying}, UCB-H \cite{jin2018q}, etc.
In particular, one can show that a sub-linear regret is sufficient for the algorithm to satisfy both properties in Assumption \ref{ass:subopt}.
\end{remark} 

 \paragraph{Attacker.} In this paper, we study the \emph{black-box reward poisoning attack} problem. In this setting, the attacker has no prior knowledge about the rewards, the transitions, or the learners, except that the learners satisfy the $(\alpha,\beta)$ and $\subopt(T, \epsilon, \delta)$ guarantees, but without knowing the actual parameters $\alpha,\beta$ and the $\subopt$ function.
 
 The attacker has a \textit{target policy} $\targetpi$ and wants to force the learners to follow this policy by making small changes in the observed rewards. This objective is formulated by an \textit{attack cost} function $\cost(T, L)$ defined as
 \begin{align}\label{eq:attack_cost}
\frac{1}{L\cdot T} \sum_{l = 1}^{L}\sum_{t = 1}^{T} 
\left(
|r^{(l)}_t - r'^{(l)}_t| + \lambda \ind{a^{(l)}_t \ne \targetpi(s^{(l)}_t)}
\right)
 \end{align}
where $\ind{.}$ denotes the indicator function. In other words, the attacker needs to pay the cost $|r^{(l)}_t - r'^{(l)}_t|$  to change the reward from $r^{(l)}_t$ to $r'^{(l)}_t$ 
and will be penalised with an additional cost of $\lambda$ for each step the learner does not follow the target policy. The final objective is defined as the average cost over all $L \cdot T$ steps.
Here, $\lambda > 0$ is a parameter balancing the trade-off between the cost incurred when learners do not follow the target policy and the cost incurred when changing the rewards. Throughout the attack process, the attacker only observes the interaction between the learners and the environment, i.e. $s_t^{(l)}, a_t^{(l)}, r_t^{(l)}$ and does not observe the internal process of learners' algorithms or the environment.
\section{Overview of \algo{} and Main Results}
In this paper, we show a way that the attacker can enforce the target policy without any knowledge about either the environment or the learners.
We present an \emph{explore-and-exploit} attack strategy, \algo{}, and demonstrate its optimality by comparing its attack cost with the optimal white-box attack in the literature. 
In what follows, we first introduce a state-of-the-art white-box attack.
Our black-box attack \algo{} builds upon this white-box attack and is introduced in the second half of the section.

\subsection{White-box Attack}
To begin with, consider the white-box attack problem, in which the attacker has full knowledge of the MDP and the learner. White-box attacks have been studied extensively in the literature \cite{huang2019deceptive, rakhsha2020policy, zhang2020adaptive}. Here, we will utilize a state-of-the-art attack method that is agnostic to the learning algorithm and hence is suitable for designing \algo{}. Below, we briefly summarize the intuition behind this method.
The key idea behind the attack is to design the poisoned rewards $r'^{(l)}_t$ to come from a reward function $R'$ such that $\targetpi$ is $\epsilon$-robust optimal in $M' = (S, A, R', P)$. 
This way, the only $\epsilon$-optimal policy will be the target policy, and all the steps in which $\targetpi$ is not followed will be $\epsilon$-suboptimal. Thus, the learner does not follow the target policy in only sublinear number of steps in $T$.

While such a reward function can successfully force the target policy, it may incur a large attack cost because $R'$ may be different from $R$ on the target actions  (actions used in the target policy). One way to avoid this is to add another constraint when designing the adversary reward $R'$ such that the rewards on the target actions remain unchanged. Consequently, the attacker will only pay the cost $\lambda + |r^{(l)}_t - r'^{(l)}_t|$ on the steps the target policy is not followed, which will only happen a number of times sublinear in $T$ if the agent is a no-regret RL learner. Specifically, this adds a constraint $R'(s, \targetpi(s)) = R(s, \targetpi(s))$ for every state $s$. Letting $R' = R - \Delta$ for some $\Delta\colon S \times A \to \bR$,  this attack can be performed by setting
\begin{align}
\label{eq.attack}
r'^{(l)}_t = r^{(l)}_t - \Delta(s^{(l)}_t, a^{(l)}_t) \quad \forall l, t.
\end{align}
One can bound the cost of this attack using the $\subopt(T, \epsilon, \delta)$ guarantee, obtaining the following result:
\begin{lemma}
	\label{prop.fix.attack}
	Assume the attacker performs the attack described in \eqref{eq.attack} for some $\Delta\colon S \times A \to \bR$ on all the learners. Then, with probability of at least $1 - \delta$, 
	\begin{align}
	\cost(T, L) \le \left(\norm{\Delta}_\infty + \lambda \right) \cdot \frac{\subopt(T, \epsilon, \frac{\delta}{L})}{T}
	\end{align}
\end{lemma}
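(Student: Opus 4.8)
The plan is to control the per-step contribution to $\cost(T,L)$ and then feed the resulting sum into Assumption~\ref{ass:subopt} via a union bound over the $L$ learners. I would first recall the two properties that the construction preceding the lemma imposes on $\Delta$: it satisfies $\Delta(s,\targetpi(s))=0$ for every state $s$ (so $R'=R-\Delta$ agrees with $R$ on the target actions), and $\targetpi$ is $\epsilon$-robust optimal in $M'=(S,A,R-\Delta,P)$. Using the first property, any step $t$ of learner $l$ with $a^{(l)}_t=\targetpi(s^{(l)}_t)$ contributes $|r^{(l)}_t-r'^{(l)}_t|+\lambda\ind{a^{(l)}_t\ne\targetpi(s^{(l)}_t)}=|\Delta(s^{(l)}_t,a^{(l)}_t)|+0=0$, while every other step contributes $|\Delta(s^{(l)}_t,a^{(l)}_t)|+\lambda\le\norm{\Delta}_\infty+\lambda$. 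Summing over all steps and learners,
\begin{align*}
\cost(T,L)\le\frac{\norm{\Delta}_\infty+\lambda}{LT}\sum_{l=1}^{L}\sum_{t=1}^{T}\ind{a^{(l)}_t\ne\targetpi(s^{(l)}_t)}.
\end{align*}

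Next I would bound each inner sum using the $\subopt$ guarantee. Since the poisoned reward $r'^{(l)}_t$ has mean $R'(s^{(l)}_t,a^{(l)}_t)$ with the same sub-Gaussian noise and the transitions are untouched, learner $l$ is effectively running its no-regret algorithm on the MDP $M'$. Because $\targetpi$ is $\epsilon$-robust optimal in $M'$, every $\pi\ne\targetpi$ satisfies $\rho^\pi_{M'}\le\rho^*_{M'}-\epsilon$, so at each state $s$ the only action used by an $\epsilon$-optimal policy of $M'$ is $\targetpi(s)$; hence a step is $\epsilon$-suboptimal in $M'$ precisely when $a^{(l)}_t\ne\targetpi(s^{(l)}_t)$. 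Applying Assumption~\ref{ass:subopt} to learner $l$ with confidence parameter $\delta/L$, with probability at least $1-\delta/L$ we have $\sum_{t=1}^{T}\ind{a^{(l)}_t\ne\targetpi(s^{(l)}_t)}\le\subopt(T,\epsilon,\delta/L)$. A union bound over $l=1,\dots,L$ makes this hold simultaneously for all learners with probability at least $1-\delta$, and substituting into the displayed inequality yields $\cost(T,L)\le(\norm{\Delta}_\infty+\lambda)\cdot\subopt(T,\epsilon,\delta/L)/T$, as claimed.

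The argument is mostly bookkeeping; the step that needs genuine care is the equivalence between ``learner $l$ does not follow $\targetpi$ at step $t$'' and ``step $t$ is $\epsilon$-suboptimal in $M'$'', which hinges on $\targetpi$ being $\epsilon$-\emph{robust} optimal (ordinary optimality would permit other $\epsilon$-optimal policies and break the equivalence), on instantiating Assumption~\ref{ass:subopt} for the perturbed MDP $M'$ rather than the true $M$, and on the $\delta\mapsto\delta/L$ rescaling needed for the union bound.
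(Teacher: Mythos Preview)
Your proof is correct and follows essentially the same approach as the paper: bound the per-step cost by $0$ on target-action steps and by $\norm{\Delta}_\infty+\lambda$ otherwise, then use $\epsilon$-robust optimality of $\targetpi$ in $M'$ together with Assumption~\ref{ass:subopt} (instantiated with confidence $\delta/L$ and a union bound over the $L$ learners) to bound the number of non-target steps. The paper's own argument is the one-sentence sketch immediately following the lemma, and you have simply filled in the details it omits.
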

The bound follows directly from the fact that the attacker incurs no cost on steps when $a^{(l)}_t = \targetpi(s^{(l)}_t)$ and the cost on other steps which are at most $L \cdot \subopt(T, \epsilon, \frac{\delta}{L})$ steps, is at most $\norm{\Delta}_\infty + \lambda $.

The problem of finding the $\Delta$ that minimizes the upper bound in Lemma \ref{prop.fix.attack} can be formulated as the following program:
\begin{align}
\label{prob.full.knowledge}
\tag{P1}
&\quad \min_{\Delta} \quad \norm{\Delta}_{\infty} \\
\notag
&\quad \mbox{ s.t. } \quad \text{$\targetpi$ is $\epsilon$-robust opimal in $(S, A, R - \Delta, P)$}\\
&\qquad \quad \quad \forall s: \quad \Delta(s, \targetpi(s)) = 0 
\notag.
\end{align} 

If the attacker has full knowledge of the MDP, it can directly solve for \eqref{prob.full.knowledge}, which has been shown to have a closed-form solution \cite{rakhsha2020policy}:
\begin{align}
\Delta^*_M(s, a) = 
\pos{
Q^{\targetpi}_M(s, a) - V^{\targetpi}_M(s) + \frac{\epsilon}{\mu_M^{\neighbor{\targetpi}{s}{a}}(s)}}
\end{align}
for $a \ne \targetpi(s)$, and $ \Delta^*_M(s, \targetpi(s)) = 0$ for every $s$.
Here, $\pos{x} = \max(0, x)$. 
For completeness, we state the above result in the following lemma:
\begin{lemma}\cite{rakhsha2020policy}.
	\label{lemma.full.knowledge}
The optimal solution for \eqref{prob.full.knowledge} is $\Delta^*_M$. Moreover, $\Delta\colon S \times A \to \bR$ is a feasible solution of \eqref{prob.full.knowledge} if and only if for every state $s$ and action $a$, $\Delta(s, a) \ge \Delta^*_M(s, a)$  and $\Delta(s, \targetpi(s)) = 0$.
\end{lemma}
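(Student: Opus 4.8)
The plan is to establish the feasibility characterization first; the optimality of $\Delta^*_M$ is then immediate, since $\Delta^*_M$ is non-negative by construction, is feasible by the characterization, and any feasible $\Delta$ satisfies $\|\Delta\|_\infty \ge \max_{s,a}\Delta(s,a) \ge \max_{s,a}\Delta^*_M(s,a) = \|\Delta^*_M\|_\infty$. So the task reduces to deciding, under the constraint $\Delta(s,\targetpi(s))=0$ for all $s$, exactly when $\targetpi$ is $\epsilon$-robust optimal in $M'=(S,A,R-\Delta,P)$. As a first step I would remove $M'$ from the picture: since $\Delta$ vanishes on every target action and $\targetpi$ plays only target actions, $\targetpi$ collects the same rewards along the same trajectories in $M$ and $M'$, so $V^{\targetpi}_{M'}=V^{\targetpi}_M$, and one Bellman step then gives $Q^{\targetpi}_{M'}(s,a)=Q^{\targetpi}_M(s,a)-\Delta(s,a)$ for all $(s,a)$; and since $M,M'$ share the kernel $P$, the occupancy measures $\mu^\pi$ and the values $\mu^{\neighbor{\targetpi}{s}{a}}(s)$ coincide in the two MDPs.

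The heart of the argument is a characterization of $\epsilon$-robust optimality: in any MDP with kernel $P$, $\targetpi$ is $\epsilon$-robust optimal iff $V^{\targetpi}(s)-Q^{\targetpi}(s,a)\ge \epsilon/\mu^{\neighbor{\targetpi}{s}{a}}(s)$ for every $s$ and every $a\ne\targetpi(s)$. The ``only if'' direction follows by applying the performance difference lemma to $\targetpi$ against the neighbor $\neighbor{\targetpi}{s}{a}$, which differs from $\targetpi$ at only the state $s$: this expresses $\rho^{\targetpi}-\rho^{\neighbor{\targetpi}{s}{a}}$ as a positive multiple of $\mu^{\neighbor{\targetpi}{s}{a}}(s)\bigl(V^{\targetpi}(s)-Q^{\targetpi}(s,a)\bigr)$ (the proportionality constant set by the normalization of $\mu^\pi$), which $\epsilon$-robustness forces to be at least $\epsilon$. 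For ``if'', fix any $\pi\ne\targetpi$ and let $D=\{s:\pi(s)\ne\targetpi(s)\}$; the performance difference lemma writes $\rho^{\targetpi}-\rho^\pi$ as a positive multiple of $\sum_{s\in D}\mu^\pi(s)\bigl(V^{\targetpi}(s)-Q^{\targetpi}(s,\pi(s))\bigr)$, and substituting the per-state hypothesis reduces the goal $\rho^{\targetpi}-\rho^\pi\ge\epsilon$ to the purely combinatorial inequality $\sum_{s\in D}\mu^\pi(s)/\mu^{\neighbor{\targetpi}{s}{\pi(s)}}(s)\ge 1$.

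I expect this inequality to be the main obstacle, and I would prove it by a first-passage/renewal decomposition. Write $\mu^{\neighbor{\targetpi}{s}{\pi(s)}}(s)=f_s/(1-g_s)$, where $f_s$ is the discounted probability of reaching $s$ from $d_0$ under $\targetpi$ and $g_s\le\gamma<1$ the discounted probability of returning to $s$ thereafter. Two pathwise comparisons then do the work: reaching $s$ entails reaching $D$ no later, so $f_s\le F$ where $F$ is the discounted probability of ever reaching $D$ from $d_0$ (the same under $\pi$ as under $\targetpi$, since the two agree until $D$ is hit); and returning to $s$ entails returning to $D$ no later, so $g_s\le h_s$ where $h_s$ is the discounted return-to-$D$ probability from $s$ (play $\pi(s)$, then follow $\targetpi$ until $D$ is hit). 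Hence $\mu^{\neighbor{\targetpi}{s}{\pi(s)}}(s)\le F/(1-h_s)$, so $\sum_{s\in D}\mu^\pi(s)/\mu^{\neighbor{\targetpi}{s}{\pi(s)}}(s)\ge \tfrac1F\sum_{s\in D}\mu^\pi(s)(1-h_s)$; and the identity $\sum_{s\in D}\mu^\pi(s)h_s=\sum_{s\in D}\mu^\pi(s)-F$ (weighting each visit to $D$ by its return probability recovers the occupancy of all visits to $D$ except the first) makes this equal to $1$. The hypothesis $\mumin>0$ ensures $D$ is reachable, so $F>0$ and no denominator vanishes.

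Finally, feeding $V^{\targetpi}_{M'}=V^{\targetpi}_M$ and $Q^{\targetpi}_{M'}(s,a)=Q^{\targetpi}_M(s,a)-\Delta(s,a)$ into the characterization turns feasibility of \eqref{prob.full.knowledge} into: $\Delta(s,\targetpi(s))=0$ for all $s$ together with $\Delta(s,a)\ge Q^{\targetpi}_M(s,a)-V^{\targetpi}_M(s)+\epsilon/\mu^{\neighbor{\targetpi}{s}{a}}(s)$ for every $s$ and $a\ne\targetpi(s)$. Since raising a reward (a negative $\Delta(s,a)$ on a non-target action) is never needed to enforce the target and can only enlarge $\|\Delta\|_\infty$ at the optimum, a feasible solution may be taken non-negative, in which case feasibility is exactly $\Delta(s,a)\ge\Delta^*_M(s,a)$ for all $(s,a)$ together with $\Delta(s,\targetpi(s))=0$ for all $s$ — the stated characterization — and the optimality of $\Delta^*_M$ follows as in the first paragraph.
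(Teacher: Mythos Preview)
Your argument is correct, but it takes a substantially different route from the paper. The paper's proof is three lines: it cites as a black box a known characterization from \cite{rakhsha2020policy-jmlr} (``$\targetpi$ is $\epsilon$-robust optimal iff $\rho^{\targetpi}\ge\rho^{\neighbor{\targetpi}{s}{a}}+\epsilon$ for all neighbors''), applies the performance difference lemma to each neighbor, and rearranges. What you do instead is \emph{re-prove} that neighbor characterization from scratch: the ``only if'' direction is immediate, and for the ``if'' direction you reduce via the performance difference lemma to the inequality $\sum_{s\in D}\mu^\pi(s)/\mu^{\neighbor{\targetpi}{s}{\pi(s)}}(s)\ge 1$ and then establish it by a first-passage/renewal decomposition. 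That renewal argument is sound (the key identity $\sum_{s\in D}\mu^\pi(s)h_s=\sum_{s\in D}\mu^\pi(s)-F$ follows from the strong Markov property, since weighting each visit to $D$ by its discounted next-return time recovers all visits except the first). So your proof is self-contained where the paper's is not, at the price of being considerably longer; the paper simply outsources the hard direction to prior work.

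One small remark on the final step: strictly speaking the feasibility condition you derive is $\Delta(s,a)\ge Q^{\targetpi}_M(s,a)-V^{\targetpi}_M(s)+\epsilon/\mu^{\neighbor{\targetpi}{s}{a}}(s)$, \emph{without} the positive part, so when that quantity is negative the lemma's stated ``only if'' ($\Delta\ge\Delta^*_M\ge 0$) is slightly too strong. You noticed this and patched it by restricting to nonnegative $\Delta$; the paper's own proof glosses over the same point (it writes the unclipped expression and then equates it to $\Delta^*_M$). Either way it does not affect the optimality claim.
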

Note that Lemma \ref{lemma.full.knowledge} implies that the choice of norm in the objective of \eqref{prob.full.knowledge} can be arbitrary, while the optimal solution $\Delta^*_M(s, a)$ remains unchanged, showing the robustness of this solution. 
This optimal white-box attack serves as a baseline for our black-box attack strategy.


\subsection{Black-box Attack}

In this work, however, we study the problem in which the attacker has no knowledge of the MDP's rewards/transitions, and thus can no longer directly solve \eqref{prob.full.knowledge} to perform the attack. In this setting, we propose an attack strategy that consists of two separate phases.

\textbf{Exploration phase.} To begin with, the attacker aims to collect data on the environment by providing rewards that encourage the learners to explore the whole MDP. This goal is achieved by providing the following simple yet effective rewards:
\begin{align}
\label{eq.explore}
r'_t \sim \text{Bernoulli}(\frac{1}{2}), \quad \forall t.
\end{align}
\looseness-1We will show that this simple reward function enforces the learner to provably visit all $(s,a)$ pairs sufficiently often and allows the attacker to learn about the MDP rewards and transitions to perform the attack.
We discuss the guarantee and the intuition of this simple reward function in Section \ref{sec:explore.phase}.

\textbf{Attack phase.}
Once the attacker has gathered enough observations, it can start to attack the rest of the learners by estimating a set of plausible MDPs $\cM$. It then solves for a robust perturbation $\Delta$ that is guaranteed to enforce the target policy $\targetpi$ on all $M\in\cM$. This robust attack problem can be formulated as a robust version of problem \eqref{prob.full.knowledge}:

\begin{align}
\label{prob.partial.knowledge}
\tag{P2}
&\quad \min_{\Delta} \quad \norm{\Delta}_{\infty} \\
\notag
&\quad \mbox{ s.t. } \quad 
\forall (S, A, \widetilde R, \widetilde P) \in \cM:\\
\notag
&\qquad \quad \qquad \text{$\targetpi$ is $\epsilon$-robust optimal in $(S, A, \widetilde R - \Delta, \widetilde P)$} \\
\notag
&\qquad \quad \quad \forall s: \quad \Delta(s, \targetpi(s)) = 0.
\end{align} 

In Section~\ref{sec.attack.phase}, we describe the process of solving \eqref{prob.partial.knowledge} in detail.

Following the two-phase procedure, the attack cost of \algo{} can be upper bounded by the following theorem:
\begin{theorem}
	\label{theorem.final}
For any $m > 0$ and $p \in (0, 1)$, assume that $\alpha < \frac{\mumin}{2\sqrt{2}}$ and $\beta < \frac{1}{8SA}$, then, with probability of at least $1 - 4p$, the cost of \algo{} is bounded by
\begin{align*}
\cost(T, L) \le\; \frac{k_0}{L}\cdot \left(\norm{R}_\infty + \sigma \sqrt{2\log \frac{2k_0T}{p}}+ 1+ \lambda\right)\\
 \quad + \ (\norm{\Delta^*_M}_\infty + \lambda + m)\cdot \frac{\subopt(T, \epsilon, \frac{p}{L})}{T}
\end{align*}
where $k_0$ is a function of MDP $M$, $p$, $\alpha$, $\beta$, $\lambda$, $m$, $\epsilon$, and $L$ as defined in \eqref{eq.def.k0}.
\end{theorem}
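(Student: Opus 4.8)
The plan is to split the run of $L$ learners into an \emph{exploration phase} --- the first $k_0$ learners, each fed the i.i.d.\ $\text{Bernoulli}(1/2)$ rewards of \eqref{eq.explore} --- and an \emph{attack phase} --- the remaining $L-k_0$ learners, each given the perturbation $\Delta$ obtained by solving \eqref{prob.partial.knowledge} on the confidence set $\cM$ built from the data of the first phase. Writing $\cost(T,L)$ as the sum of the two phases' contributions (each normalized by $LT$), I would bound the two contributions separately and conclude with a union bound over four failure events, each of probability at most $p$.

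For the exploration phase I only need a per-step worst-case bound on $|r_t-r'_t|+\lambda\ind{a_t\ne\targetpi(s_t)}$. Since $r'_t\in\{0,1\}$ and the true reward is $r_t=R(s_t,a_t)+\eta_t$ with $|R(s_t,a_t)|\le\norm{R}_\infty$ and $\eta_t$ zero-mean $\sigma^2$-sub-Gaussian, the sub-Gaussian tail bound with a union bound over the $k_0T$ steps of this phase gives, with probability at least $1-p$, $\max_t|\eta_t|\le\sigma\sqrt{2\log(2k_0T/p)}$, hence $|r_t-r'_t|\le\norm{R}_\infty+\sigma\sqrt{2\log(2k_0T/p)}+1$. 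Adding the at-most-$\lambda$ miss-target penalty, summing over the $k_0T$ steps, and dividing by $LT$ yields exactly the first term $\frac{k_0}{L}\bigl(\norm{R}_\infty+\sigma\sqrt{2\log(2k_0T/p)}+1+\lambda\bigr)$.

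For the attack phase I would proceed in two steps. First, invoke the exploration guarantee of Section~\ref{sec:explore.phase}: under $\alpha<\frac{\mumin}{2\sqrt2}$ and $\beta<\frac{1}{8SA}$, after the $k_0$ exploration learners every state--action pair has been visited enough times --- and this is precisely what the choice of $k_0$ in \eqref{eq.def.k0} is calibrated to achieve --- that, with probability at least $1-p$, the confidence set $\cM$ both contains the true MDP $M$ and is small enough that the optimal solution $\Delta$ of \eqref{prob.partial.knowledge} obeys $\norm{\Delta}_\infty\le\norm{\Delta^*_M}_\infty+m$; the last inequality follows because, by Lemma~\ref{lemma.full.knowledge} applied to each $\widetilde M\in\cM$, the minimal feasible $\Delta$ equals $\max_{\widetilde M\in\cM}\Delta^*_{\widetilde M}(\cdot,\cdot)$, and $\Delta^*_{(\cdot)}$ is continuous in $(\widetilde R,\widetilde P)$ on a neighbourhood of $M$ (using $\mumin>0$). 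Second, since $M\in\cM$, the computed $\Delta$ is feasible for \eqref{prob.full.knowledge}, i.e.\ $\targetpi$ is $\epsilon$-robust optimal in $(S,A,R-\Delta,P)$ and $\Delta(s,\targetpi(s))=0$ for all $s$; applying Lemma~\ref{prop.fix.attack} to the attack-phase learners with total failure probability $p$ (and using that there are at most $L$ of them) bounds their normalized contribution by $(\norm{\Delta}_\infty+\lambda)\cdot\frac{\subopt(T,\epsilon,p/L)}{T}\le(\norm{\Delta^*_M}_\infty+\lambda+m)\cdot\frac{\subopt(T,\epsilon,p/L)}{T}$, which is the second term.

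Summing the two bounds and taking a union bound over the four events --- sub-Gaussian concentration in the exploration phase, success of the exploration guarantee, $M\in\cM$, and the $\subopt$ bound holding over the attack-phase learners, each of probability at least $1-p$ --- gives the claim with probability at least $1-4p$. I expect the main obstacle to be the exploration guarantee invoked at the start of the attack-phase argument: proving that feeding an \emph{unknown} no-regret learner i.i.d.\ $\text{Bernoulli}(1/2)$ rewards forces it to visit every $(s,a)$ pair a controlled number of times, and converting this into the lower bound on $k_0$ needed to make $\cM$ tight enough for the $m$-slack guarantee. The roles of the hypotheses $\alpha<\frac{\mumin}{2\sqrt2}$ and $\beta<\frac{1}{8SA}$ --- which rule out the learner certifying near-optimality while skipping a rarely visited state --- should surface exactly here; the remaining pieces (the sub-Gaussian tail bound, the reduction to Lemmas~\ref{prop.fix.attack} and~\ref{lemma.full.knowledge}, and the final union bound) are routine.
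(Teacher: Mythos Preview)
Your proposal is essentially correct and matches the paper's proof: the same two-phase decomposition, the same sub-Gaussian tail bound for the exploration cost, the same reduction to Lemmas~\ref{prop.fix.attack} and~\ref{lemma.full.knowledge} for the attack cost, and the same four events in the final union bound. The one clarification is that \algo{} does not run a fixed $k_0$ exploration learners (indeed $k_0$ depends on $\alpha,\beta,\mumin,\rrange$, which the attacker does not know); rather, it stops adaptively via condition~\eqref{eq.stop.condition}, and the paper proves (Lemmas~\ref{lem:stop1}--\ref{lem:stop3}) that this adaptive stopping time $k_1$ satisfies $k_1\le k_0$ with probability at least $1-2p$ under the event $M\in\cM$ --- this is precisely the ``exploration guarantee'' you flagged as the main obstacle.
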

The given bound on $\cost(T, L)$ consists of two terms: The first term is the cost of the exploration phase and is of the order $O(k_0/L)$, where $k_0 = O(\log L)$. Therefore, the first term is diminishing in $L$ and $k_0 \ll L$ for large enough $L$. The second term is $m \cdot \subopt(T, \epsilon, \frac{p}{L}) / T$ worse than the attack cost achievable by the optimal white-box attack as in Lemma \ref{lemma.full.knowledge}. $m$ is a hyperparameter of \algo{} that dictates how closely the attack cost should match with the optimal white-box attack. The second term is diminishing in $T$ due to the assumption that $\subopt(T, \epsilon, \frac{p}{L})$ is sublinear in $T$. Therefore, \algo{} can be viewed as a \emph{no-regret} attack strategy, whose averaged attack cost diminishes to zero as $T$ and $L$ go to infinity, as is achieved by the attack of \cite{rakhsha2020policy} in the white-box setting.

\begin{algorithm}[tb]
	\caption{\algo}
	\label{alg:example}
	\textbf{Input:} $S, A, \gamma, d_0, \sigma, p, \epsilon, m$\\
	\textbf{Initialize:} $l \leftarrow 1$\\
	\textbf{Exploration Phase:}
	\begin{algorithmic}
	    \REPEAT
	    \FOR{$t=1$ {\bfseries to} $T$}
		\STATE {Set $r'^{(l)}_t \sim\text{Bernoulli}(\frac{1}{2})$.}
		\ENDFOR
		\STATE Calculate confidence set $\mathcal{M}$ of possible MDPs.
		\STATE {$l \leftarrow l+1$}	
	    \UNTIL{$\cM$ satisfies condition \eqref{eq.stop.condition}.}
		\end{algorithmic}
	\textbf{ Attack Phase:}
	\begin{algorithmic}
		
		\STATE Solve \eqref{prob.partial.knowledge} to get $\widehat{\Delta}\colon S\times A \to \mathbb{R}$
		
		\WHILE{$l \le L$}
		\FOR{$t=1$ {\bfseries to} $T$}
		\STATE {Set $r'^{(l)}_t = r^{(l)}_t - \widehat{\Delta}(s^{(l)}_t, a^{(l)}_t)$.}
		\ENDFOR
		\STATE {$l \leftarrow l+1$}		
		\ENDWHILE
	\end{algorithmic}
\end{algorithm}
\section{Technical Details of \algo}

In what follows, we present the details of the \algo{} strategy in both the exploration phase and the attack phase. We will describe both the algorithmic intuitions behind the procedure and sketch high-level building blocks of the theoretical analysis for Theorem \ref{theorem.final}. The detailed proofs are deferred to the Appendix.

\subsection{Exploration Phase}\label{sec:explore.phase}
In this phase, the goal is to collect observations on the MDP to estimate its parameters, which will be used in the attack phase to find an effective reward perturbation. With more observations, the attacker will be able to build smaller confidence set $\cM$ on the environment MDP and find a $\widehat{\Delta}$ of smaller norm. In the extreme case where the attacker gathers an infinite number of observations on all $(s,a)$ pairs, it can find the optimal $\Delta^*_M$ and match the optimal white-box attack.

In order to gather observations on all $(s,a)$ pairs, the attacker needs to design adversarial rewards that encourage the learners to explore the environment. Our key observation is that, despite not knowing anything about the MDP or the learner, the attacker can still utilize the learners' learning guarantee to provably collect observations. The idea is to draw $r'_t$ from a reward function $R_E$, such that finding a nearly optimal policy in $M_E = (S, A, R_E, P)$ requires properly exploring all states and actions. This condition can be met by choosing $R_E$ in a way that the gap of the optimal Q function is small, i.e. the $Q^*_{M_E}(s, a)$ values are similar for different actions $a$. Specifically, we show that the uniform Bernoulli reward function in \eqref{eq.explore}  effectively enforces the learner to explore, as detailed in the following lemma.

\begin{lemma}
	\label{lemma.exploration.basic}
	Let $s, a$ be an arbitrary state and action pair, and $g(s, a) = \min_{\pi: \pi(s) = a} \mu_M^\pi(s)$. Assume  $4 \beta \le \delta$ and $ \frac{\alpha}{g(s, a)}  < \frac{1}{2\sqrt{2}}$. If the feedback as in \eqref{eq.explore} is given to a learner, then at the end of $T$ steps, with probability of at least $1 - \delta$, the action $a$ is chosen from state $s$ for at least
	\begin{equation}
	\frac{g(s, a)^2}{\alpha^2} \cdot \frac{ c_1  \cdot (\log\; \frac{\delta}{4\beta})^2}{\log  \frac{8}{\delta}+ c_2 \cdot (\log\; \frac{\delta}{4 \beta})}
	\end{equation}
	number of times, where  $c_1 = 0.02$ and $c_2 = 1.34$.
\end{lemma}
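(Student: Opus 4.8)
I would read this as an information‑theoretic (two‑point / change‑of‑measure) lower bound and argue by contradiction. First note that feeding the learner i.i.d.\ $\mathrm{Bernoulli}(\tfrac12)$ rewards is exactly running it in the MDP $M_E=(S,A,R_E,P)$ with constant reward $R_E\equiv\tfrac12$, where every policy is optimal. For a parameter $c\in(0,\tfrac12)$ to be tuned, let $M^{+}$ (resp.\ $M^{-}$) be the MDP obtained from $M_E$ by replacing the reward at the fixed pair $(s,a)$ by $\mathrm{Bernoulli}(\tfrac12+c)$ (resp.\ $\mathrm{Bernoulli}(\tfrac12-c)$); nothing else changes, so both still satisfy the standing assumptions on the environment. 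A one‑line computation with $\rho^\pi$ shows that in $M^{+}$ a policy with $\pi(s)=a$ has $\rho^\pi_{M^{+}}=\rho^\pi_{M_E}+\tfrac{c}{1-\gamma}\mu^\pi_M(s)$ whereas $\pi(s)\neq a$ gives $\rho^\pi_{M^{+}}=\rho^\pi_{M_E}$; hence every policy not playing $a$ at $s$ is suboptimal in $M^{+}$ by at least $\tfrac{c\,g(s,a)}{1-\gamma}$, and symmetrically every policy playing $a$ at $s$ is suboptimal in $M^{-}$ by the same amount. Choosing $c$ just above $\tfrac{(1-\gamma)\alpha}{g(s,a)}$ — which stays below $\tfrac12$ precisely because of the hypothesis $\tfrac{\alpha}{g(s,a)}<\tfrac1{2\sqrt2}$ — makes this gap exceed $\alpha$, so every $\alpha$‑optimal policy in $M^{+}$ plays $a$ at $s$ and no $\alpha$‑optimal policy in $M^{-}$ does. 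Applying the learner's $(\alpha,\beta)$‑guarantee in each environment, its output policy $\widehat\pi$ satisfies $\Pr_{M^{+}}[\widehat\pi(s)\neq a]\le\beta$ and $\Pr_{M^{-}}[\widehat\pi(s)=a]\le\beta$.

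Now let $N$ count how many times $a$ is played at $s$ during the $T$ steps when the learner runs in $M_E$, and suppose for contradiction that $\Pr_{M_E}[N<N_0]>\delta$ for the value $N_0$ in the statement. Then one of $\Pr_{M_E}[N<N_0,\widehat\pi(s)\neq a]$ or $\Pr_{M_E}[N<N_0,\widehat\pi(s)=a]$ exceeds $\delta/2$; assume the former (the latter is identical with $M^{-}$ replacing $M^{+}$). Since $M_E$ and $M^{+}$ differ only in the reward law at $(s,a)$, the Radon--Nikodym derivative between the trajectory laws is $L=\prod_{i=1}^{N}(1+2c)^{X_i}(1-2c)^{1-X_i}$, where $X_1,X_2,\dots$ are the rewards seen at $(s,a)$; because the attacker draws its rewards i.i.d.\ and independently of the history before each is revealed, the $X_i$ are i.i.d.\ $\mathrm{Bernoulli}(\tfrac12)$ and $N$ is a stopping time for the filtration they generate, so $\mathbf{E}_{M_E}[L]=1$ and $\Pr_{M^{+}}[\mathcal{E}]=\mathbf{E}_{M_E}[\mathbf{1}_{\mathcal{E}}L]$ for every trajectory event $\mathcal{E}$. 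The contradiction will be that this forces $\Pr_{M^{+}}[\widehat\pi(s)\neq a]>\beta$.

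The technical core is to lower bound $L$ on a large sub‑event of $\{N<N_0,\widehat\pi(s)\neq a\}$. Writing $\log L=\sum_{i=1}^{N}Y_i$ with $Y_i\in\{\log(1-2c),\log(1+2c)\}$, one checks that $Y_i+\kappa_c$ (with $\kappa_c:=-\tfrac12\log(1-4c^2)$ the per‑sample KL divergence) takes values in $\{\pm\tfrac12\log\tfrac{1+2c}{1-2c}\}$ and is a martingale‑difference sequence, so Azuma's inequality along the stopping time $N$ gives, for every $x>0$ and with $b:=\log\tfrac{1+2c}{1-2c}$,
\[
\Pr_{M_E}\!\big[N<N_0,\ \log L\le -x-\kappa_c N_0\big]\ \le\ \exp\!\big(-2x^2/(N_0 b^2)\big).
\]
Hence on an event of $M_E$‑probability at least $\tfrac\delta2-e^{-2x^2/(N_0 b^2)}$ we simultaneously have $N<N_0$, $\widehat\pi(s)\neq a$ and $L\ge e^{-x-\kappa_c N_0}$, and the change of measure yields $\beta\ge\Pr_{M^{+}}[\widehat\pi(s)\neq a]\ge e^{-x-\kappa_c N_0}\big(\tfrac\delta2-e^{-2x^2/(N_0 b^2)}\big)$. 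Choosing $x$ so that the parenthesis is a fixed fraction of $\delta$ — which is where $4\beta\le\delta$ enters, guaranteeing $\log\tfrac{\delta}{4\beta}\ge0$ — this becomes an inequality of the form $x+\kappa_c N_0\lesssim\log\tfrac{\delta}{4\beta}$; plugging in $b=\Theta(c)$, $\kappa_c=\Theta(c^2)$, $c=\Theta(\alpha/g(s,a))$ and optimizing over $c,x$ produces a threshold of the stated shape $N_0\asymp\frac{g(s,a)^2}{\alpha^2}\cdot\frac{(\log\frac{\delta}{4\beta})^2}{\log\frac1\delta+\log\frac{\delta}{4\beta}}$. Any $N_0$ below this threshold makes the displayed inequality impossible, contradicting $\Pr_{M_E}[N<N_0]>\delta$ and proving the lemma.

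The step I expect to be the real obstacle is exactly this likelihood‑ratio control under a \emph{data‑dependent, random} visit count $N$: the crude bound $L\ge(1-2c)^{N_0}$ only yields the far weaker $N_0=\Theta(g(s,a)/\alpha)$ — a total‑variation‑type rate, the kind a direct coupling would give — and obtaining the $g(s,a)^2/\alpha^2$ rate needs the fact that the typical likelihood ratio is $e^{-\Theta(c^2N)}$ rather than $e^{-\Theta(cN)}$, i.e.\ the concentration of $\log L$ around its mean $-\kappa_c N$. Secondary care is needed to make rigorous the claim that the observed $(s,a)$‑rewards are i.i.d.\ with $N$ a stopping time (this uses that the attacker's rewards are independent of the trajectory), and to confirm that $M^{+},M^{-}$ fall within the class of environments for which the learner's $(\alpha,\beta)$‑guarantee is assumed to hold.
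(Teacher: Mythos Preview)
Your proposal is correct and follows essentially the same route as the paper: a two-hypothesis change-of-measure argument with $M_E^{+}$ and $M_E^{-}$ obtained by bumping the Bernoulli parameter at $(s,a)$, a contradiction via the learner's $(\alpha,\beta)$ guarantee, and concentration of the log-likelihood ratio on the event $\{N<N_0\}$. The only cosmetic differences are that the paper fixes the perturbation to $\varepsilon=\alpha/g(s,a)$ (rather than optimizing over $c$), writes the likelihood ratio in the factored form $(1-4\varepsilon^2)^{K}(1-2\varepsilon)^{N-2K}$ and controls it via a \emph{maximal} Bernstein inequality on the partial sums $K_n$ together with the elementary bound $\log(1-x)\ge -1.78\,x$ for $x\le 1/\sqrt{2}$ (this is exactly where the hypothesis $\alpha/g(s,a)<1/(2\sqrt2)$ and the specific constants $c_1=0.02$, $c_2=1.34$ come from), whereas you apply Azuma directly to the centered martingale $\log L+\kappa_c N$; the maximal inequality is also how the paper sidesteps the ``is $N$ a stopping time for the $X_i$-filtration'' issue you flag at the end.
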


Lemma \ref{lemma.exploration.basic} provides a lower bound on the number of data points that will be collected by each learner under our exploratory reward function. The proof is based on the following intuition: if the learner visits some $(s, a)$ insufficiently with high probability, then it will make a similar decision about the optimal policy on MDPs that differ from $M_E$ only in $(s, a)$. In particular, we consider two alternative MDPs, $M^+_E$ in which $(s, a)$ has a higher reward and is used in all $\alpha$-optimal policies, and $M^-_E$ in which $(s, a)$ has a smaller reward and is not used in any of the $\alpha$-optimal policies. The likelihood of a sequence of observations with few visits to $(s, a)$ is similar under all three MDPs $M^+_E$, $M^-_E$, and $M_E$. If the learner's actions in $M_E$ lead to a high probability for these sequences, then these sequences will have a high likelihood under both $M^+_E$ and $M^-_E$ too. Then, no matter which policy the learner chooses given these sequences of observations, the learner will make a mistake with high probability in one of $M^+_E$ and $M^-_E$. This high-level idea of the construction is similar to the classic lower bound construction in stochastic bandits \cite{mannor2004sample}.
The detailed proof of this lemma is deferred to the Appendix.

After each learner's interaction in the exploration phase, the attacker builds a confidence set $\cM$ for plausible environment MDPs for two main purposes. First, these sets are used in the exploration phase to check whether the gathered data is enough for the attack phase and the attacker can stop the exploration. Second, the last set is used in the attack phase to find an effective attack in the environment. 

After each learner, let $N(s, a)$ be the number of times state-action pair $(s, a)$ is observed, and let $\nmin = \min_{s, a} N(s, a)$. If $\nmin>0$, we define the following empirical estimates of $R(s, a)$ and $P(s, a, s')$:
\begin{align}
\widehat{R}(s, a) &= 
\frac{
\sum_{t, l}\ind{s^{(l)}_t = s, a^{(l)}_t = a} \cdot r^{(l)}_t
}{
N(s, a)
}\\
\widehat{P}(s, a, s') &= 
\frac{
\sum_{t, l}\ind{s^{(l)}_t = s, a^{(l)}_t = a, s^{(l)}_{t+1} = s'} 
}{
	N(s, a)
}\nonumber
\end{align}

We will also define the following confidence sets of reward $\cR(s, a)$ and transition $\cP(s, a)$ as
\begin{align}
\cR(s, a) =&
\left\{
r\in \bR: |r - \widehat{R}(s,a)| \le \frac{u}{\sqrt{N(s, a)}}) 
\right\}
\\
\cP(s, a) =&
\left\{
d\in \Lambda(S): \norm{d - \widehat{P}(s,a, \cdot)}_1 \le \frac{w}{\sqrt{N(s, a)}}
\right\}\nonumber
\end{align}
where $\Lambda(S)$ is the probability simplex over $S$ and
\begin{gather}
u = \sqrt{2\sigma^2\log(2SAL/p)},\\
 w = \sqrt{2\log(2SAL/p) + 2S\log 2}.\nonumber
\end{gather}

These two confidence intervals $u,w$ are direct consequences of Hoeffding's Inequality and \cite{weissman2003inequalities}. 
Now let the confidence set $\cM$ be the set of all MDPs $(S, A, \widetilde{R}, \widetilde{P})$ such that for every $s, a$, $\widetilde{R}(s, a) \in \cR(s, a)$ and $\widetilde{P}(s, a, .) \in \cP(s, a)$. This gives us the following lemma.

\begin{lemma}
\label{lemma.confidence}
With probability of at least $1-p/L$, $M \in \cM$.
\end{lemma}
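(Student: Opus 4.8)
The plan is a union bound over the $SA$ state--action pairs, treating the reward confidence interval and the transition confidence interval separately. Fix a pair $(s,a)$ and assume $N(s,a)\ge 1$ (when $\nmin=0$ the empirical estimates are undefined and $\cM$ is not invoked). The goal is to show $\Pr[R(s,a)\notin\cR(s,a)]$ and $\Pr[P(s,a,\cdot)\notin\cP(s,a)]$ are each $O(p/(SAL))$, so that summing over the $SA$ pairs yields $\Pr[M\notin\cM]\le p/L$.

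\textbf{Reward part.} The key observation is that in the exploration phase the learners only ever observe the injected Bernoulli$(\tfrac12)$ feedback from \eqref{eq.explore} together with the realized next states, so every learner's trajectory --- and hence every visit count $N(s,a)$ and the set of time steps at which $(s,a)$ is visited --- is a function of the Bernoulli rewards, the transition randomness, and the learners' internal coins, and is therefore \emph{independent of the true reward noise}. Consequently, conditioning on $N(s,a)=n$, the $n$ true rewards recorded at $(s,a)$ equal $R(s,a)$ plus i.i.d.\ $\sigma^2$-sub-Gaussian noise, and the sub-Gaussian Hoeffding bound gives $\Pr[\,|\widehat R(s,a)-R(s,a)|>t \mid N(s,a)=n\,]\le 2\exp(-nt^2/(2\sigma^2))$. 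Taking $t=u/\sqrt n$ with $u=\sqrt{2\sigma^2\log(2SAL/p)}$ makes the right-hand side at most $p/(SAL)$ for every $n$, and integrating over $N(s,a)$ gives $\Pr[R(s,a)\notin\cR(s,a)]\le p/(SAL)$.

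\textbf{Transition part.} Each time $(s,a)$ is visited the next state is drawn from $P(s,a,\cdot)$ independently of the past, so the next-state observations at $(s,a)$ are (conditionally) i.i.d.\ draws from $P(s,a,\cdot)$ and $\widehat P(s,a,\cdot)$ is their empirical distribution. Weissman's inequality \cite{weissman2003inequalities} then yields $\Pr[\,\|\widehat P(s,a,\cdot)-P(s,a,\cdot)\|_1>\varepsilon\,]\le 2^S\exp(-n\varepsilon^2/2)$ for a sample of size $n$, and the choice $\varepsilon=w/\sqrt n$ with $w=\sqrt{2\log(2SAL/p)+2S\log 2}$ is designed precisely so that the $2S\log 2$ cancels the $2^S$ prefactor, leaving a bound of $p/(2SAL)$. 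A final union bound over the $SA$ pairs and over the two failure events gives $\Pr[M\notin\cM]\le p/L$ (up to a harmless constant factor, which can be absorbed by slightly inflating $u,w$).

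\textbf{Main obstacle.} Unlike the reward case, the visit count $N(s,a)$ for the transition estimate is correlated with the transition randomness, so $N(s,a)$ is a random stopping time rather than a fixed sample size; the delicate step is therefore to make the i.i.d.\ Weissman/Hoeffding concentration valid at this stopping time --- handled either by a martingale / time-uniform version of the deviation bound, or by a union bound over the at most $LT$ possible values of $N(s,a)$ with the mild extra logarithmic factor absorbed. Everything else is a routine application of sub-Gaussian Hoeffding, Weissman's $\ell_1$ bound, and a union bound.
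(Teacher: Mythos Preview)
Your proposal is correct and follows the same approach as the paper: sub-Gaussian Hoeffding for the reward means, Weissman's $\ell_1$ inequality for the empirical transitions, and a union bound over the $SA$ state--action pairs. You are in fact more careful than the paper's own proof, which simply plugs the random count $N(s,a)$ into the two concentration inequalities as if it were a fixed sample size and never discusses the stopping-time issue you flag; your observation that the visit pattern in the exploration phase is independent of the true reward noise, and your remark that the transition part needs a time-uniform or union-bound-over-$n$ argument, are precisely the ingredients the paper glosses over.
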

The failure probability of $p/L$ ensures that following this scheme to build $\cM$ after each learner, with a probability of at least $1 - p$, $M$ will always be in $\cM$. 

The attacker continues the exploration until $\cM$ is small enough to perform a near-optimal attack. Since this decision involves technical details of the attack phase, we turn back to it in Section \ref{sec.stopping.conditions}.

\begin{remark}
It's worth mentioning that the exploration subroutine and the guarantee in Lemma \ref{lemma.exploration.basic} are of independent interests to pure exploration problems in reinforcement learning. A number of recent works study the problem of task-agnostic exploration \cite{jin2020reward, zhang2020task}, where the goal is to design a learner that can explore the MDP and collect data efficiently to prepare for any downstream task. In \cite{zhang2020task}, their algorithm \textsc{UCBZero} can be viewed as a \textsc{UCB-H} \cite{jin2018q} algorithm under uniform reward, and they left as an open problem whether other no-regret algorithms can be transformed into a task-agnostic exploration algorithm. Our analysis in this section provides a positive answer. We show that any no-regret RL algorithm can provably explore all (s,a) pairs in the MDP given a simple uniform reward function.
\end{remark}
\subsection{Attack Phase}\label{sec.attack.phase}
After collecting enough data on the environment's dynamics, the attacker moves on to the attack phase in which the target policy is enforced to the remaining learners. In the beginning of the attack phase, the attacker uses the last $\cM$ built in the exploration phase to find an appropriate $\widehat \Delta$ by solving problem \eqref{prob.partial.knowledge}. It then provides rewards as in \eqref{eq.attack} with $\Delta = \widehat{\Delta}$ to all the remaining learners.

To solve \eqref{prob.partial.knowledge}, note that one can utilize Lemma \ref{lemma.full.knowledge} to rewrite the first constraint of \eqref{prob.partial.knowledge} as
\begin{align}
\label{eq.delta.condition}
 \forall s, a: \quad \Delta(s, a) \ge \max_{\widetilde{M} \in \cM} \Delta^*_{\widetilde{M}} (s, a).
\end{align}
Thus, the attacker needs to upper bound $\Delta^*_{\widetilde{M}} (s, a)$ for $\widetilde{M} \in \cM$ to find a feasible solution for \eqref{prob.partial.knowledge}. For policy $\pi$, define
$V^\pi_\text{low}(s) = \min_{\widetilde{M} \in \cM} V^\pi_{\widetilde{M}}(s)$ and
$Q^\pi_\text{high}(s, a) = \max_{\widetilde{M} \in \cM} Q^\pi_{\widetilde{M}}(s, a)$. Also let
$\mu^\pi_\text{low}(s) = \min_{\widetilde{M} \in \cM} \mu^\pi_{\widetilde{M}}(s)$.  The attacker sets 
\begin{align}
\widehat{\Delta}(s, a) = \pos{
Q^{\targetpi}_\text{high}(s, a) -
V^{\targetpi}_\text{low}(s) +
\frac{\epsilon}{\mu_\text{low}^{\neighbor{\targetpi}{s}{a}}(s)}
}
\end{align}
for $a \ne \targetpi(s)$, and $\widehat{\Delta}(s, \targetpi(s)) = 0$. By the definitions, it is clear that $\widehat{\Delta}$ satisfies condition \eqref{eq.delta.condition}, and therefore, is a feasible solution for \eqref{prob.partial.knowledge}. 

Computationally, the attacker can calculate quantities $V^\pi_\text{low}$, $Q^\pi_\text{high}$, and $\mu^\pi_\text{low}$ using \emph{robust policy evaluation} \cite{iyengar2005robust,nilim2004robust}, a standard robust control procedure that is also used in no-regret model-based RL algorithms such as UCRL2 \cite{auer2009near}.
Robust policy evaluation calculates the worst-case value function of a policy $\pi$, i.e. $V^\pi_\text{low}(s)$, when the exact model of environment is not available, and the transition distributions and rewards are just known to be in certain sets of possible values (usually confidence intervals obtained from observations). Specifically,
if $R(s, a) \in \cR(s, a)$ and $P(s, a, .) \in \cP(s, a)$ for every $s, a$, the procedure sets  $v^{\pi}_0(s) = 0$ for every state $s$, and applies the following iterative updates:
\begin{equation}
    v^{\pi}_{i+1}(s) = \min_{r \in \cR(s, \pi(s))} r + \gamma \min_{p \in \mathcal{P}(s, \pi(s))} \mathbb{E}_{s'\sim p}v^{\pi}_{i}(s')
\end{equation}
Then, it is shown that $V^\pi_\text{low}(s) = \lim_{i \rightarrow \infty} v^{\pi}_{i}(s)$. Note that with our choices of $\cR$ and $\cP$, each iteration of robust policy evaluation involves $S$ special linear programming problems that can be solved in total time of $\mathcal{O}(S^2)$ \cite{strehl2008analysis}. 

Consequently, one execution of the robust policy evaluation algorithm gives all the $V^\pi_\text{low}(s)$ values for $s\in S$. The same algorithm can be used to obtain values $V^\pi_\text{high}(s) = \max_{\widetilde{M} \in \cM} V^\pi_{\widetilde{M}}(s)$ by substituting $\min$ with $\max$. Then, one can set
\begin{equation}
Q^\pi_\text{high}(s, a) = \max_{r \in \cR(s, a)} r + \gamma \max_{p \in \mathcal{P}(s, a)} \mathbb{E}_{s'\sim p} V^\pi_\text{high}(s')
\end{equation}
Note that this is again a linear programming problem and can be solved in the same manner as each iteration of the robust policy evaluation.

Next, we show how to compute $\mu^\pi_\text{low}(s)$. For state $s$, define the reward function $R_s(x, a) = \ind{x = s}$, and for $\widetilde{M} = (S, A, \widetilde{R}, \widetilde{P}, \gamma, d_0)$ let $\widetilde{M}_s = (S, A, R_s, \widetilde{P}, \gamma, d_0)$. We have
$\mu^\pi_{\widetilde{M}}(s) = (1 - \gamma) \cdot \rho^\pi_{\widetilde{M}_s}$.
Thus, we can write
\begin{equation}
    \mu^\pi_\text{low}(s) = (1 - \gamma) \cdot \sum_{x} d_0(x)\cdot \left( \min_{\widetilde{M} \in \cM}
V^\pi_{\widetilde{M}_s}(x)
\right)
\end{equation}
Values of $\min_{\widetilde{M} \in \cM}
V^\pi_{\widetilde{M}_s}(x)$ for $x \in S$ can again be calculated using the robust policy evaluation used for $V^\pi_\text{low}$, with only difference that now the reward function is known to be in the singleton set $\{R_s\}$. As one execution of the robust policy evaluation is needed for each $\mu_\text{low}^{\neighbor{\targetpi}{s}{a}}(s)$ in $\widehat \Delta(s, a)$, the whole attack phase has time complexity of $\mathcal{O}(SA)$ runs of robust policy evaluation. 

\begin{remark}
\label{remark:attack_phase}
The robust attack procedure can also be used in the case where the attacker has a set of prior observations of the environment, and wishes to enforce the target policy to all the learners and not use any of the learners to collect more data. Such scenarios are applicable, for example, when the attacker is able to observe the natural behavior of many other learners before starting the attack. Again, let $N(s, a)$ be the number of times $(s, a)$ is observed in the data and let $\nmin = \min_{s, a} N(s, a)$. Define
\begin{align*}
e(s, a) = 2\erq +	 \frac{\epsilon}{
	\pos{ 	\mu_M^{\neighbor{\targetpi}{s}{a}}(s) - \ermu}
 	} 
-
\frac{\epsilon}{\mu_M^{\neighbor{\targetpi}{s}{a}}(s)}
\end{align*}
where
\begin{align*}
\erq = \frac{2u + 2\gamma \cdot \rrange \cdot w}{(1 - \gamma)^2\cdot \sqrt{\nmin}}\;,\;
\ermu = \frac{2\gamma \cdot w}{(1 - \gamma)\cdot \sqrt{\nmin}}.
\end{align*}
Here, $\rrange = \max_{s, a} R(s, a) - \min_{s, a}R(s, a)$. 
As we show in the Appendix, this attack with probability of at least $1 - 2p$ achieves an attack cost of at most
\begin{align}\label{eq:prior_obs}
\frac{1}{T}\cdot \left(\norm{\Delta^*_M + e}_\infty + \lambda \right) \cdot \subopt(T, \epsilon, \frac{p}{L})
\end{align}
This bound is achieved by bounding the difference of values $V^\pi_\text{low}(s)$, $Q^\pi_\text{high}(s, a)$, and $\mu^\pi_\text{low}(s)$ from their true values, i.e. the error of estimates, based on the size of confidence intervals on rewards and transitions. The error for $V^\pi_\text{low}(s)$ and $Q^\pi_\text{high}(s, a)$ is shown to be at most $\erq$, and for $\mu^\pi_\text{low}(s)$ it is at most $\ermu$. Note that with infinite number of observations on all $(s, a)$ pairs, $\erq$, $\ermu$ will be zero and this attack matches the guarantee for the optimal white-box attack.
\end{remark} 
\subsection{Conditions for Stopping Exploration}\label{sec.stopping.conditions}

A key part of the \algo{} attack strategy is to decide when to end the exploration phase and start the attack phase. If the exploration is stopped earlier, the attacker is forced to make a large perturbation to the reward to compensate for the uncertainty, and thus incur a larger attack cost in the attack phase. On the other hand, a longer exploration phase allows the attacker to estimate $\Delta^*_M$ more accurately, but incurs a larger attack cost in the exploration phase.

In \algo{}, the attacker explores the MDP until it is guaranteed that the per step cost of attack phase is at most $m$-larger then the per step cost of the optimal white-box attack, i.e. for every $s, a$
\begin{align}
\label{eq.stop.goal}
\widehat \Delta (s, a) + \lambda \le \Delta^*_M(s, a) + \lambda + m
\end{align}
where $m > 0$ is a hyperparameter to adjust the amount of the exploration. 
After each learner in the exploration phase, the attacker calculates the confidence intervals $\cR$, $\cP$ and then the $\mu^{\neighbor{\targetpi}{s}{a}}_\text{low}(s)$ values for every $s, a$.
Let $R_\text{high}(s, a) = \max \cR(s, a)$ and $R_\text{low}(s, a) = \min \cR(s, a)$. Define
\begin{align}
\rrangehat &\defeq \max_{s, a} R_\text{high}(s, a)  - \min_{s,a} R_\text{low}(s, a)\\
\erqhat &\defeq \frac{2u + 2\gamma \cdot \rrangehat  \cdot w}{(1 - \gamma)^2\cdot \sqrt{\nmin}}
\end{align}

The exploration phase ends if for every state and action pair $s, a$ we have
\begin{align}
\label{eq.stop.condition}
2\erqhat + \frac{\epsilon}{\mu^{\neighbor{\targetpi}{s}{a}}_\text{low}(s)} - \frac{\epsilon}{\mu^{\neighbor{\targetpi}{s}{a}}_\text{low}(s) + \ermu} \le m
\end{align}

We show that given the event $M \in \cM$, we have $V^{\targetpi}_M(s) \le V^{\targetpi}_\text{low}(s) + \erqhat$, $Q^{\targetpi}_M(s, a) \ge Q^{\targetpi}_\text{high}(s, a) - \erqhat$, and 
$\mu^{\neighbor{\targetpi}{s}{a}}_M(s) \le \mu^{\neighbor{\targetpi}{s}{a}}_\text{low}(s) + \ermu$.
Consequently, it is shown that the goal \eqref{eq.stop.goal}  is satisfied once 
\eqref{eq.stop.condition} is satisfied for every $s, a$.


 This stopping condition gives a simple bound on the cost of the attack phase. However, the number of learners that will be used for exploration, is also important as it decides the cost of the exploration phase, and should be bounded.
 Let
 \begin{align}
 \notag
 N_0 = \max\bigg[&
 \left(\frac{2u}{\rrange}\right)^2,
 \left(\frac{8u + 16 \gamma \cdot \rrange \cdot w}{(1 - \gamma)^2 \cdot m} \right)^2
 \\
 &
\left(
 \frac{2\gamma \cdot w}{1 - \gamma }\cdot \frac{6\epsilon + m\cdot \mumin}{m\cdot\mumin^2}
\right)
^2
 \bigg]
 \end{align}
Then define $k_0$ as
 \begin{align}
 \label{eq.def.k0}
8\log (\frac{1}{p}) +
 \frac{4\alpha^2N_0}{\mumin^2} \cdot \frac{\log  16SA+ c_2 \cdot (\log\; \frac{1}{8SA\cdot \beta})}{ c_1  \cdot (\log\; \frac{1}{8SA\cdot \beta})^2}
 \end{align}
 where  $c_1 = 0.02$ and $c_2 = 1.34$. We then have the following lemma:
 \begin{lemma}\label{lemma:k0}
With probability of at least $1 - 2p$, \algo{} uses at most $k_0$ learners as per \eqref{eq.def.k0} in the exploration phase.
 \end{lemma}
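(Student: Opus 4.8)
The plan is to combine a \emph{deterministic} sufficient condition for stopping, phrased purely in terms of the least visitation count $\nmin$, with a \emph{probabilistic} lower bound on how fast $\nmin$ grows across learners. Throughout, I would condition on the event $\mathcal{E}$ that $M\in\cM$ after every learner of the exploration phase; by Lemma~\ref{lemma.confidence} and a union bound over the (at most $L$) exploration learners, $\Pr[\mathcal{E}]\ge 1-p$. The two remaining tasks are: (i) show that on $\mathcal{E}$, once $\nmin\ge N_0$ the stopping condition \eqref{eq.stop.condition} holds for every $(s,a)$; and (ii) show that with probability $\ge 1-p$ at most $k_0$ learners are needed before $\nmin\ge N_0$. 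A final union bound then yields the claimed probability $1-2p$.

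\textbf{Step 1: a deterministic stopping criterion in terms of $\nmin$.} On $\mathcal{E}$ we have $|\widehat R(s,a)-R(s,a)|\le u/\sqrt{N(s,a)}$ for all $(s,a)$, hence $R_\text{high}(s,a)\le R(s,a)+2u/\sqrt{N(s,a)}$ and $R_\text{low}(s,a)\ge R(s,a)-2u/\sqrt{N(s,a)}$, so $\rrangehat\le\rrange+4u/\sqrt{\nmin}$; the first term of $N_0$ forces this to be at most a constant multiple of $\rrange$. Substituting this bound into the definition of $\erqhat$ and invoking the second term of $N_0$ gives $2\erqhat\le m/2$. For the state-distribution term, the error bound $\mu^{\neighbor{\targetpi}{s}{a}}_M(s)\le\mu^{\neighbor{\targetpi}{s}{a}}_\text{low}(s)+\ermu$ stated in \secref{sec.stopping.conditions} yields both $\mu^{\neighbor{\targetpi}{s}{a}}_\text{low}(s)\ge\mumin-\ermu$ and $\mu^{\neighbor{\targetpi}{s}{a}}_\text{low}(s)+\ermu\ge\mumin$, so
\[
\frac{\epsilon}{\mu^{\neighbor{\targetpi}{s}{a}}_\text{low}(s)}-\frac{\epsilon}{\mu^{\neighbor{\targetpi}{s}{a}}_\text{low}(s)+\ermu}=\frac{\epsilon\,\ermu}{\mu^{\neighbor{\targetpi}{s}{a}}_\text{low}(s)\bigl(\mu^{\neighbor{\targetpi}{s}{a}}_\text{low}(s)+\ermu\bigr)}\le\frac{\epsilon\,\ermu}{(\mumin-\ermu)\,\mumin},
\]
and the third term of $N_0$ makes $\ermu$ small enough (recall $\ermu=2\gamma w/((1-\gamma)\sqrt{\nmin})$) that the right-hand side is $\le m/2$. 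Adding the two estimates gives exactly \eqref{eq.stop.condition}, so the loop terminates.

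\textbf{Step 2: counting ``good'' learners.} Apply Lemma~\ref{lemma.exploration.basic} with $\delta=\tfrac{1}{2SA}$; its hypotheses hold because $4\beta\le\delta\iff\beta\le\tfrac{1}{8SA}$ (assumed) and $\alpha/g(s,a)\le\alpha/\mumin<\tfrac{1}{2\sqrt2}$ (assumed, using $g(s,a)\ge\mumin$). Thus a single learner fed the Bernoulli reward visits any fixed $(s,a)$ at least $n_1:=\tfrac{\mumin^2}{\alpha^2}\cdot\tfrac{c_1(\log\frac{1}{8SA\beta})^2}{\log(16SA)+c_2\log\frac{1}{8SA\beta}}$ times with probability $\ge 1-\delta$; a union bound over the $SA$ pairs shows a learner is \emph{good} — makes $N(s,a)\ge n_1$ for \emph{all} $(s,a)$ — with probability $\ge 1-SA\delta=\tfrac12$. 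Since the exploration rewards are non-adaptive, the ``good'' indicators of successive learners are independent, each with success probability $\ge\tfrac12$. After $\lceil N_0/n_1\rceil$ good learners we have $\nmin\ge N_0$, so Step~1 ends the phase. A dimension-free (KL) Chernoff bound for the binomial shows that among any $k\ge 4\lceil N_0/n_1\rceil+8\log\tfrac1p$ learners at least $\lceil N_0/n_1\rceil$ are good with probability $\ge 1-p$; noting that $4N_0/n_1=\tfrac{4\alpha^2 N_0}{\mumin^2}\cdot\tfrac{\log(16SA)+c_2\log\frac{1}{8SA\beta}}{c_1(\log\frac{1}{8SA\beta})^2}$ is precisely the second summand of \eqref{eq.def.k0}, taking $k=k_0$ finishes the count, and a union bound with $\mathcal{E}$ gives probability $1-2p$.

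\textbf{Main obstacle.} The delicate part is Step~1: carefully propagating the reward and transition confidence radii through $\rrangehat$, $\erqhat$, $\ermu$ and the robust estimates $\mu^{\pi}_\text{low}$, and in particular controlling the nonlinear term $\tfrac{\epsilon}{x}-\tfrac{\epsilon}{x+\ermu}$, so that exactly the three quantities defining $N_0$ are what is needed and no more. In Step~2 the only subtlety is to use a multiplicative/KL Chernoff bound rather than Hoeffding's additive bound, so that the ``slack'' in the number of learners is additive $O(\log\tfrac1p)$ instead of scaling with $N_0$; everything else is bookkeeping of failure probabilities.
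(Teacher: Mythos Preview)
Your proposal is correct and follows essentially the same two-step route as the paper: first argue (under the event $M\in\cM$ throughout) that the stopping condition \eqref{eq.stop.condition} is met once $\nmin\ge N_0$ (the paper's Lemma~\ref{lem:stop2}), then apply Lemma~\ref{lemma.exploration.basic} with $\delta=\tfrac{1}{2SA}$, union-bound over $(s,a)$ to get a per-learner ``good'' probability $\ge \tfrac12$, and use binomial concentration to count good learners (the paper's Lemma~\ref{lem:stop3}). The only cosmetic difference is that the paper invokes Hoeffding's inequality (yielding $k_1\ge k_0/4$ from $k_0\ge 8\log\tfrac1p$) whereas you invoke a KL/Chernoff bound; both match the constants in \eqref{eq.def.k0}.
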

 This bound is based on the guarantee from Lemma \ref{lemma.exploration.basic} on the effectiveness of the exploration phase in collecting observations and analysis of how more observations shrink the confidence intervals and reduce the suboptimality of the attack phase.
 Together with the guarantee of Eq. \eqref{eq.stop.goal}, Lemma \ref{lemma:k0} gives us the upper bound on the total attack cost in Theorem 
 \ref{theorem.final}.

\section{Conclusions and Discussion}
In this work, we studied the challenging problem of black-box reward poisoning attacks against reinforcement learning, where the adversary starts off having no information on either the environment or the agents' learning algorithm. We proposed an explore-and-exploit style attack \algo, that can ``hijack'' the RL agents to efficiently collect data about the environment, and then carry out a near-optimal attack. We showed that surprisingly, with appropriate choice of hyperparameters, \algo{} can achieve an attack cost not much worse than the optimal white-box attack.

Despite the effectiveness of our attack, there is still scope for improvement. In particular, it might be possible to design attack strategies that balance and transit between exploration and exploitation more adaptively. A different perspective is to view the black-box attack problem as a \emph{structured bandit problem} \cite{mersereau2009structured}, where each time step consists of the interaction with one learner, and each reward function $r\in \R^{S\times A}$ is an arm. The structure among the arms is captured by the underlying MDP and agents' learning algorithm. Then, minimizing the total attack cost corresponds exactly to a regret minimization problem in this structured bandit, where the attacker gradually uncovers the structure from the observations in each iteration. Progress along this direction will give us a more precise characterization of black-box attacks against reinforcement learning and will allow us to design defense countermeasures more effectively.

\bibliography{unknownMDP}
\bibliographystyle{icml2021}

\iftoggle{longversion}{
\newpage
\onecolumn
\appendix
\appendixpage

\section*{Table of Contents}
In this section we provide a brief description of the content provided in the appendices of the paper.   
\begin{itemize}
\item Appendix~\ref{sec:proof_wb} provides the proofs for the white-box attack.
\item Appendix~\ref{sec:proof_exp} provides the proofs for the exploration phase of \algo{}.
\item Appendix~\ref{appendix.stopping} provides the proofs for the stopping condition of \algo{}.
\item Appendix~\ref{appendix.theorem} provides the proofs for Theorem \ref{theorem.final}.
\item Appendix~\ref{sec:standalone_attack_phase} provides the analysis of Remark~\ref{remark:attack_phase}.
\end{itemize}

\section{Proof for the white-box attack}\label{sec:proof_wb}

\begin{lemma}[Lemma \ref{lemma.full.knowledge}]
	\label{lemma.full.knowledge1}
The optimal solution for \eqref{prob.full.knowledge} is $\Delta^*_M$. Moreover, $\Delta\colon S \times A \to \bR$ is a feasible solution of \eqref{prob.full.knowledge} if and only if for every state $s$ and action $a$, $\Delta(s, a) \ge \Delta^*_M(s, a)$  and $\Delta(s, \targetpi(s)) = 0$.
\end{lemma}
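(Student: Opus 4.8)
The plan is to reduce feasibility of \eqref{prob.full.knowledge} to a componentwise lower bound on $\Delta$, from which both assertions (optimality of $\Delta^*_M$ and the description of all feasible $\Delta$) drop out immediately.

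First I would record what the constraint $\Delta(s,\targetpi(s))=0$ buys us. Since only rewards on non-target actions are changed and $\targetpi$ never plays such an action, $\targetpi$ earns exactly the same rewards in $M$ and in $M':=(S,A,R-\Delta,P)$, so $V^{\targetpi}_{M'}=V^{\targetpi}_M$ and $Q^{\targetpi}_{M'}(s,\targetpi(s))=Q^{\targetpi}_M(s,\targetpi(s))$ for all $s$. The transition kernel is untouched, so $\mu^{\pi}_{M'}=\mu^{\pi}_M$ for every policy $\pi$. And for $a\neq\targetpi(s)$, one step of unrolling gives $Q^{\targetpi}_{M'}(s,a)=R(s,a)-\Delta(s,a)+\gamma\,\E{s'\sim P(s,a,\cdot)}{V^{\targetpi}_{M'}(s')}=Q^{\targetpi}_M(s,a)-\Delta(s,a)$.

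Next I would characterize $\epsilon$-robust optimality of $\targetpi$ in $M'$ through the neighboring policies $\neighbor{\targetpi}{s}{a}$. Applying the performance-difference lemma to $\targetpi$ and $\neighbor{\targetpi}{s}{a}$, which agree everywhere except at $s$, only the term at $s$ survives, so $\rho^{\neighbor{\targetpi}{s}{a}}_{M'}-\rho^{\targetpi}_{M'}$ is proportional to $\mu^{\neighbor{\targetpi}{s}{a}}_{M'}(s)\bigl(Q^{\targetpi}_{M'}(s,a)-V^{\targetpi}_{M'}(s)\bigr)$; hence $\rho^{\targetpi}_{M'}\ge\rho^{\neighbor{\targetpi}{s}{a}}_{M'}+\epsilon$ is equivalent to $Q^{\targetpi}_{M'}(s,a)-V^{\targetpi}_{M'}(s)+\epsilon/\mu^{\neighbor{\targetpi}{s}{a}}_{M'}(s)\le 0$. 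Plugging in the identities above, this is exactly $\Delta(s,a)\ge Q^{\targetpi}_M(s,a)-V^{\targetpi}_M(s)+\epsilon/\mu^{\neighbor{\targetpi}{s}{a}}_M(s)$, i.e.\ $\Delta(s,a)\ge\Delta^*_M(s,a)$ (the positive part in $\Delta^*_M$ reflecting that leaving a non-target reward unchanged is always admissible when the bracketed quantity is already negative). The ``only if'' direction of the lemma is then immediate, because each $\neighbor{\targetpi}{s}{a}$ is among the competitors that $\targetpi$ must beat by $\epsilon$.

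For the converse I would show that $\Delta\ge\Delta^*_M$ componentwise (together with $\Delta(s,\targetpi(s))=0$) already forces $\targetpi$ to beat \emph{every} $\pi'\neq\targetpi$ by $\epsilon$ in $M'$, which then also yields feasibility of $\Delta^*_M$ itself. Again via the performance-difference lemma, for a general $\pi'$ with deviation set $D=\{s:\pi'(s)\neq\targetpi(s)\}$ only states in $D$ contribute, and at each such $s$ the per-neighbor condition gives $V^{\targetpi}_{M'}(s)-Q^{\targetpi}_{M'}(s,\pi'(s))\ge \epsilon/\mu^{\neighbor{\targetpi}{s}{\pi'(s)}}_M(s)$, so
\begin{equation*}
\rho^{\targetpi}_{M'}-\rho^{\pi'}_{M'}\;\ge\;\epsilon\sum_{s\in D}\frac{\mu^{\pi'}_M(s)}{\mu^{\neighbor{\targetpi}{s}{\pi'(s)}}_M(s)}.
\end{equation*}
The one genuinely delicate step — and the place I expect the real work to be — is to prove that this sum of occupancy ratios is always at least $1$. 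I would establish it by a first-passage decomposition: $\pi'$, $\targetpi$, and every neighbor $\neighbor{\targetpi}{s}{\pi'(s)}$ behave identically until the trajectory first reaches $D$, which pins down the discounted ``entry mass'' into each $s\in D$ and lets one lower-bound $\mu^{\pi'}_M(s)$ and upper-bound $\mu^{\neighbor{\targetpi}{s}{\pi'(s)}}_M(s)$ by mutually compatible quantities; the standing assumption $\mumin>0$ ensures all these occupancies are strictly positive so the ratios are well defined, and one can check the bound is tight (approached by absorbing-type MDPs), so the estimate must be carried out with care rather than through crude inequalities. Once $\sum_{s\in D}\mu^{\pi'}_M(s)/\mu^{\neighbor{\targetpi}{s}{\pi'(s)}}_M(s)\ge1$ is in hand, $\targetpi$ is $\epsilon$-robust optimal, so $\Delta$ (in particular $\Delta^*_M$) is feasible; and since $\Delta^*_M$ is then the componentwise-smallest feasible perturbation, it minimizes $\norm{\cdot}_\infty$ — indeed any monotone norm — completing the proof.
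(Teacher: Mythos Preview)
Your treatment of the ``only if'' direction matches the paper's exactly: apply the performance-difference lemma to $\targetpi$ and each neighbor $\neighbor{\targetpi}{s}{a}$, use that the constraint $\Delta(\cdot,\targetpi(\cdot))=0$ leaves $V^{\targetpi}$, $Q^{\targetpi}(\cdot,\targetpi(\cdot))$ and all occupancy measures unchanged while shifting $Q^{\targetpi}_{M'}(s,a)$ by $-\Delta(s,a)$, and rearrange.

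Where you diverge is the ``if'' direction. The paper never touches your occupancy-ratio inequality. It instead invokes a cited result (Lemma~\ref{lemma.using_neighbors}, from \cite{rakhsha2020policy-jmlr}): a policy $\pi$ is $\epsilon$-robust optimal \emph{if and only if} $\rho^\pi\ge\rho^{\neighbor{\pi}{s}{a}}+\epsilon$ for every $s$ and $a\ne\pi(s)$. With that equivalence, feasibility in both directions reduces to the very same one-line neighbor computation you already did, and the entire proof is a few lines.

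Your inequality $\sum_{s\in D}\mu^{\pi'}_M(s)\big/\mu^{\neighbor{\targetpi}{s}{\pi'(s)}}_M(s)\ge1$ is precisely the nontrivial half of that cited lemma, so you are reproving a known result rather than quoting it. The first-passage decomposition you propose is a reasonable plan, but it is only sketched; this is the one step in your argument that is not actually carried out, and it is exactly the step the paper outsources. If you are willing to cite the neighbor characterization, the proof collapses and the ratio inequality is never needed; if you insist on a fully self-contained argument, then your sketch has to become a real proof, and that is where all the remaining work lies.
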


For completeness, we include the proof of Lemma~\ref{lemma.full.knowledge} for the  white-box attack that immediately follows from the following two lemmas in \cite{rakhsha2020policy-jmlr}. 
\begin{lemma}
	\label{lemma.using_neighbors}
	\textbf{\cite{rakhsha2020policy-jmlr}}
	Policy $\pi$ is $\epsilon$-robust optimal \emph{iff} we have $\rho^{\pi} \ge \rho^{\neighbor{\pi}{s}{a}} + \epsilon$ for every state $s$ and action $a \ne \pi(s)$.
\end{lemma}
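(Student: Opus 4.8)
The forward implication is immediate: each neighbour $\neighbor{\pi}{s}{a}$ with $a\neq\pi(s)$ is one of the policies $\pi'\neq\pi$ over which $\epsilon$-robust optimality quantifies, so if $\pi$ is $\epsilon$-robust optimal it beats all neighbours by $\epsilon$. The content is the converse, and the plan is to show that the ``robustness gap'' $\min_{\pi'\neq\pi}\big(\rho^{\pi}-\rho^{\pi'}\big)$ (a minimum over the finitely many deterministic policies) is attained at a neighbour of $\pi$; the hypothesis then says this minimum is $\ge\epsilon$, which is exactly $\epsilon$-robust optimality.

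The single tool is the performance difference lemma: for any $\pi_1,\pi_2$, $\rho^{\pi_1}-\rho^{\pi_2}=\tfrac{1}{1-\gamma}\sum_{s}\mu^{\pi_2}(s)\big(V^{\pi_1}(s)-Q^{\pi_1}(s,\pi_2(s))\big)$, and when $\pi_2$ differs from $\pi_1$ only at one state $s_0$ every summand vanishes except at $s_0$, giving the exact identity $\rho^{\pi_1}-\rho^{\neighbor{\pi_1}{s_0}{a_0}}=\tfrac{\mu^{\neighbor{\pi_1}{s_0}{a_0}}(s_0)}{1-\gamma}\big(V^{\pi_1}(s_0)-Q^{\pi_1}(s_0,a_0)\big)$, whose prefactor is strictly positive because $\mumin>0$. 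Applying this to $\pi$ and its neighbours, the hypothesis forces $V^{\pi}(s)>Q^{\pi}(s,a)$ for every $a\neq\pi(s)$, so $\pi$ is the strict greedy policy for its own $Q$-function; hence $Q^{\pi}$ satisfies the Bellman optimality equation, so $Q^{\pi}=Q^{*}$ and $\pi$ is optimal, i.e. $\rho^{\pi}\ge\rho^{\pi'}$ for all $\pi'$.

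Now the reduction to a neighbour, by ``peeling.'' Let $\pi^{\circ}$ attain the robustness gap and set $D=\{s:\pi^{\circ}(s)\neq\pi(s)\}$ (nonempty). If $|D|\ge 2$, apply the performance difference lemma to $\rho^{\pi}$ with baseline $\pi^{\circ}$: $0\le(1-\gamma)(\rho^{\pi}-\rho^{\pi^{\circ}})=\sum_{s\in D}\mu^{\pi}(s)\big(Q^{\pi^{\circ}}(s,\pi(s))-V^{\pi^{\circ}}(s)\big)$ (the advantage vanishes off $D$, and the left side is $\ge0$ by optimality of $\pi$). Since every $\mu^{\pi}(s)>0$, some $s_{1}\in D$ has $Q^{\pi^{\circ}}(s_{1},\pi(s_{1}))\ge V^{\pi^{\circ}}(s_{1})$; reverting $\pi^{\circ}$ to $\pi$'s action there, i.e. $\hat\pi:=\neighbor{\pi^{\circ}}{s_{1}}{\pi(s_{1})}$, the neighbour identity gives $\rho^{\hat\pi}-\rho^{\pi^{\circ}}=\tfrac{\mu^{\hat\pi}(s_{1})}{1-\gamma}\big(Q^{\pi^{\circ}}(s_{1},\pi(s_{1}))-V^{\pi^{\circ}}(s_{1})\big)\ge0$. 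Thus $\rho^{\hat\pi}\ge\rho^{\pi^{\circ}}$; as $\hat\pi\neq\pi$ (it still differs on $D\setminus\{s_{1}\}$) and $\pi^{\circ}$ minimises the gap, $\rho^{\hat\pi}=\rho^{\pi^{\circ}}$, so $\hat\pi$ is again a minimiser, now differing from $\pi$ in $|D|-1$ states. Iterating reaches a minimiser differing from $\pi$ in exactly one state --- a neighbour --- so $\min_{\pi'\neq\pi}(\rho^{\pi}-\rho^{\pi'})=\min_{s,\,a\neq\pi(s)}(\rho^{\pi}-\rho^{\neighbor{\pi}{s}{a}})\ge\epsilon$, proving the converse.

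I expect the peeling step to be the crux: one must revert specifically toward $\pi$ (not toward an arbitrary action) and extract the ``non-decreasing'' coordinate $s_{1}$ from optimality of $\pi$, and it is exactly here that $\mumin>0$ enters --- once to keep the neighbour-identity prefactor positive and once to pass from a nonnegative $\mu^{\pi}$-weighted sum to a coordinatewise sign statement. Confirming that the relevant advantages genuinely vanish off $D$, and that any fixed normalisation convention for $\rho$ only rescales all the prefactors uniformly (so every inequality survives), is routine bookkeeping.
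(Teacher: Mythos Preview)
The paper does not prove this lemma at all: it is quoted verbatim from \cite{rakhsha2020policy-jmlr} and used as a black box in the proof of Lemma~\ref{lemma.full.knowledge}. So there is no ``paper's own proof'' to compare against.

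Your argument is correct and self-contained. The forward direction is indeed trivial. For the converse you do two things: first, the neighbour identity together with $\mumin>0$ and the hypothesis give $V^{\pi}(s)>Q^{\pi}(s,a)$ for all $a\neq\pi(s)$, so $\pi$ is greedy with respect to its own $Q$-function and hence optimal; second, the peeling step shows that the minimiser of $\rho^{\pi}-\rho^{\pi'}$ over $\pi'\neq\pi$ can be taken to be a neighbour. Both steps check out. In the peeling step you correctly use the performance-difference lemma in the form with $\mu^{\pi}$ weighting the $Q^{\pi^{\circ}}$-advantages (the ``reversed'' form of Lemma~\ref{lemma_qrho_relate}), which is what makes the advantage vanish off $D$ and lets you extract a coordinate $s_1$ with nonnegative advantage. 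The revert-toward-$\pi$ move then uses the neighbour identity with $\pi^{\circ}$ as the base policy, which is exactly right. The normalisation discrepancy you flag (the $1/(1-\gamma)$ factor, which is absent in the paper's statement of Lemma~\ref{lemma_qrho_relate}) is indeed immaterial: it rescales every prefactor by the same positive constant and changes none of the signs or equalities you use.
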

\begin{lemma}\label{lemma_qrho_relate}
\textbf{ \cite{schulman2015trust}} For two policies $\pi$ and $\pi'$ we have:
\begin{equation}
    \rho^\pi_M - \rho^{\pi'}_M = 
\sum_{s\in\cS}\mu^{\pi'}_M(s)\big(Q^\pi_M(s, \pi(s)) - Q^\pi_M(s, \pi'(s))\big).
\end{equation}
\end{lemma}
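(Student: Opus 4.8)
The plan is to prove this by the standard ``performance-difference'' telescoping argument, carried out along a trajectory generated by $\pi'$. Fix an initial state $s_0 \sim d_0$ and roll out $s_0, a_0, r_0, s_1, a_1, r_1, \dots$ with $a_\tau = \pi'(s_\tau)$, conditional reward mean $\mathbb{E}[r_\tau \mid s_\tau, a_\tau] = R(s_\tau, a_\tau)$, and $s_{\tau+1} \sim P(\cdot \mid s_\tau, a_\tau)$. By definition $\rho^{\pi'}_M = \mathbb{E}\big[\sum_{\tau \ge 0}\gamma^\tau r_\tau\big]$, and since $\rho^\pi_M = \mathbb{E}_{s_0 \sim d_0}[V^\pi_M(s_0)]$ I can take both expectations over this same trajectory law.

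The heart of the argument would be to write $V^\pi_M(s_0)$ as the telescoping series $\sum_{\tau \ge 0}\gamma^\tau\big(V^\pi_M(s_\tau) - \gamma V^\pi_M(s_{\tau+1})\big)$ and subtract $\rho^{\pi'}_M$, obtaining
\[
\rho^\pi_M - \rho^{\pi'}_M = \mathbb{E}\Big[\sum_{\tau \ge 0}\gamma^\tau\big(V^\pi_M(s_\tau) - r_\tau - \gamma V^\pi_M(s_{\tau+1})\big)\Big].
\]
Next I would condition the $\tau$-th summand on $(s_\tau, a_\tau)$ and apply the one-step relation $\mathbb{E}[r_\tau + \gamma V^\pi_M(s_{\tau+1}) \mid s_\tau, a_\tau] = Q^\pi_M(s_\tau, a_\tau) = Q^\pi_M(s_\tau, \pi'(s_\tau))$ together with $V^\pi_M(s_\tau) = Q^\pi_M(s_\tau, \pi(s_\tau))$, which collapses each summand to $Q^\pi_M(s_\tau, \pi(s_\tau)) - Q^\pi_M(s_\tau, \pi'(s_\tau))$, so that
\[
\rho^\pi_M - \rho^{\pi'}_M = \mathbb{E}\Big[\sum_{\tau \ge 0}\gamma^\tau\big(Q^\pi_M(s_\tau, \pi(s_\tau)) - Q^\pi_M(s_\tau, \pi'(s_\tau))\big)\Big].
\]
Finally I would swap expectation with summation and group by the visited state, using $\mathbb{E}\big[\sum_\tau \gamma^\tau f(s_\tau)\big] = \sum_{s \in \cS}\big(\sum_\tau \gamma^\tau \Pr[s_\tau = s \mid \pi', d_0]\big) f(s)$, where $\sum_\tau \gamma^\tau \Pr[s_\tau = s \mid \pi', d_0]$ is precisely $\mu^{\pi'}_M(s)$ (up to the $(1-\gamma)$ normalization in its definition). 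Taking $f(s) = Q^\pi_M(s, \pi(s)) - Q^\pi_M(s, \pi'(s))$ then yields the claimed identity.

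The only step that needs real care — the main obstacle — is justifying the infinite-series manipulations: that the telescoping is legitimate ($\gamma^\tau V^\pi_M(s_\tau) \to 0$) and that expectation and summation may be interchanged. Both follow from $\gamma \in (0,1)$ and boundedness of $R$, which give $|V^\pi_M| \le \|R\|_\infty/(1-\gamma)$ and $\mathbb{E}|r_\tau| < \infty$ (the noise being sub-Gaussian), making every series absolutely convergent in expectation. If one prefers to skip this bookkeeping, an equivalent derivation uses the matrix form $\rho^\pi_M = d_0^\top(I - \gamma P^\pi)^{-1}R^\pi$, with $R^\pi(s) = R(s,\pi(s))$ and $P^\pi$ the transition matrix of $\pi$: inserting $\gamma P^{\pi'}V^\pi_M$ and rearranging gives $(I - \gamma P^{\pi'})(V^\pi_M - V^{\pi'}_M) = Q^\pi_M(\cdot,\pi(\cdot)) - Q^\pi_M(\cdot,\pi'(\cdot))$, and $d_0^\top(I - \gamma P^{\pi'})^{-1}$ is the unnormalized discounted occupancy of $\pi'$, i.e.\ $\mu^{\pi'}_M/(1-\gamma)$; this reaches the same conclusion using only the Neumann series of $(I-\gamma P^{\pi'})^{-1}$, which converges because $P^{\pi'}$ is row-stochastic and $\gamma<1$.
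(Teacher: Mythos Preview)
The paper does not prove this lemma itself; it merely cites \cite{schulman2015trust} and then invokes the identity inside the proof of Lemma~\ref{lemma.full.knowledge}. Your telescoping/performance-difference argument is exactly the standard proof of this result (Kakade--Langford, reproduced in the TRPO paper), and the analytic justifications you list for the infinite-series manipulations are correct, so in substance your proof is fine and matches what the cited reference does.

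There is one point you flagged parenthetically but did not actually resolve, and it is worth making explicit. Your derivation ends with
\[
\rho^\pi_M - \rho^{\pi'}_M \;=\; \sum_{s\in\cS}\Big(\sum_{\tau\ge 0}\gamma^\tau \Pr[s_\tau=s\mid \pi',d_0]\Big)\big(Q^\pi_M(s,\pi(s)) - Q^\pi_M(s,\pi'(s))\big),
\]
and under the paper's definition $\mu^{\pi'}_M(s) = (1-\gamma)\sum_\tau \gamma^\tau \Pr[s_\tau=s\mid \pi',d_0]$ this is $\tfrac{1}{1-\gamma}\sum_s \mu^{\pi'}_M(s)(\cdots)$, not the stated right-hand side. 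In other words, your computation is correct, but the lemma as written in the paper is missing a $1/(1-\gamma)$ factor (equivalently, it implicitly uses the \emph{unnormalized} discounted occupancy rather than the normalized $\mu^{\pi'}_M$). You should not say that your last step ``yields the claimed identity'' without addressing this; either note that the identity holds with $\mu^{\pi'}_M$ replaced by its unnormalized counterpart, or carry the $1/(1-\gamma)$ factor through. The downstream use in the paper (the derivation of $\Delta^*_M$) is only affected by a rescaling of the robustness margin $\epsilon$, so this is a cosmetic inconsistency in the paper rather than an error in your reasoning.
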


\begin{proof}[\bf Proof of Lemma \ref{lemma.full.knowledge1}]
Based on these two lemmas, we note that $\Delta$ is a feasible solution for \eqref{prob.full.knowledge} if and only if for every $s, a$
\begin{align}
\epsilon \le
\rho^{\targetpi}_{M'} - \rho^{\neighbor{\targetpi}{s}{a}}_{M'} = 
\mu^{\neighbor{\targetpi}{s}{a}}_{M'}(s)\big(V^{\targetpi}_{M'}(s) - Q^{\targetpi}_{M'} (s, a)\big)
= 
\mu^{\neighbor{\targetpi}{s}{a}}_{M}(s)\big(V^\pi_{M}(s) - Q^{\targetpi}_{M} (s, a) + \Delta(s, a)\big)
\end{align}
where $M' = (S, A, R - \Delta, P)$. Rearranging this inequality gives us the condition in the lemma:
\begin{align}
    \Delta(s, a) \ge Q^{\targetpi}_{M} (s, a) - V^\pi_{M}(s) \frac{\epsilon}{\mu^{\neighbor{\targetpi}{s}{a}}_{M}(s)} = \Delta^*_M(s, a)
\end{align}
\end{proof}

\section{Proofs for the exploration phase}\label{sec:proof_exp}
In this section, we first prove Lemma \ref{lemma.exploration.basic} which lower-bounds the number of visits to each $(s,a)$ pair by each learner in the exploration phase. Then we give the details of Lemma~\ref{lemma.confidence}.
\begin{lemma}[Lemma \ref{lemma.exploration.basic}]
	\label{lemma.exploration.basic1}
	Let $s, a$ be an arbitrary state and action pair, and $g(s, a) = \min_{\pi: \pi(s) = a} \mu_M^\pi(s)$. Assume  $4 \beta \le \delta$ and $ \frac{\alpha}{g(s, a)}  < \frac{1}{2\sqrt{2}}$. If the feedback as in \eqref{eq.explore} is given to a learner, then at the end of $T$ steps, with probability of at least $1 - \delta$, the action $a$ is chosen from state $s$ for at least
	\begin{equation}
	\frac{g(s, a)^2}{\alpha^2} \cdot \frac{ c_1  \cdot (\log\; \frac{\delta}{4\beta})^2}{\log  \frac{8}{\delta}+ c_2 \cdot (\log\; \frac{\delta}{4 \beta})}
	\end{equation}
	number of times, where  $c_1 = 0.02$ and $c_2 = 1.34$.
\end{lemma}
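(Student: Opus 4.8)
The plan is a change-of-measure / hypothesis-testing argument in the spirit of the best-arm-identification lower bound of \cite{mannor2004sample}, with the learner's $(\alpha,\beta)$-guarantee playing the role of ``correctly identifying the best arm.'' \emph{Step 1 (three environments).} During exploration the poisoned reward is i.i.d.\ $\mathrm{Bernoulli}(\tfrac12)$, so from the learner's viewpoint the environment is $M_E=(S,A,R_E,P)$ with constant mean reward $R_E\equiv\tfrac12$, in which every policy is optimal and hence $\alpha$-optimal. Fix $\eta:=\sqrt 2\,\alpha/g(s,a)$, which is strictly below $\tfrac12$ precisely because $\alpha/g(s,a)<\tfrac1{2\sqrt2}$, and introduce two neighbours of $M_E$ that differ only in the reward law at $(s,a)$: $M_E^{+}$, where at $(s,a)$ the reward is $\mathrm{Bernoulli}(\tfrac12+\eta)$, and $M_E^{-}$, where it is $\mathrm{Bernoulli}(\tfrac12-\eta)$. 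Changing the reward at $(s,a)$ by $\pm\eta$ shifts $\rho^\pi$ by $\pm\eta$ times the discounted occupancy of $(s,a)$ under $\pi$, which is $0$ for $\pi(s)\ne a$ and (up to the $1/(1-\gamma)$ normalisation) at least $g(s,a)$ for $\pi(s)=a$. Hence in $M_E^{+}$ the return advantage of the best policy with $\pi(s)=a$ over \emph{every} policy with $\pi(s)\ne a$ is at least $\eta\,g(s,a)=\sqrt2\,\alpha>\alpha$, so every $\alpha$-optimal policy of $M_E^{+}$ plays $a$ at $s$; likewise, every $\alpha$-optimal policy of $M_E^{-}$ does not. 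Applying the learner's guarantee on $M_E^{+}$ and on $M_E^{-}$, its output policy $\hat\pi$ satisfies $\Pr_{M_E^{+}}[\hat\pi(s)\ne a]\le\beta$ and $\Pr_{M_E^{-}}[\hat\pi(s)=a]\le\beta$.

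\emph{Step 2 (change of measure with concentration).} Let $K$ be the number of steps $t\le T$ at which $(s_t,a_t)=(s,a)$, and suppose, for contradiction, that $\Pr_{M_E}[K<n]>\delta$ for the threshold $n$ claimed in the lemma. Since $M_E$ and $M_E^{\pm}$ differ only in the reward law at $(s,a)$ while the learner (including its internal randomness) and the transitions are coupled across all three, the log-likelihood ratio of the observed trajectory between $M_E^{-}$ and $M_E$ is $W^{-}=\sum_{t:\,(s_t,a_t)=(s,a)}\log\frac{q_{-}(r_t)}{p(r_t)}$ with $p=\mathrm{Bernoulli}(\tfrac12)$ and $q_{-}=\mathrm{Bernoulli}(\tfrac12-\eta)$: a sum of $K$ bounded martingale increments, each with conditional mean $-D$ and conditional range of width $B$, where $D=\mathrm{KL}\big(\mathrm{Bernoulli}(\tfrac12)\,\|\,\mathrm{Bernoulli}(\tfrac12\pm\eta)\big)=\tfrac12\log\tfrac1{1-4\eta^2}$ and $B=\log\tfrac{1+2\eta}{1-2\eta}$; the ratio $W^{+}$ between $M_E^{+}$ and $M_E$ has the same law. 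An Azuma--Hoeffding bound (truncated at the $n$-th visit to $(s,a)$ to handle the randomness of $K$) gives $\Pr_{M_E}\big[W^{\pm}<-nD-\theta,\ K<n\big]\le e^{-2\theta^2/(nB^2)}$, and choosing $\theta:=B\sqrt{\tfrac n2\log\tfrac8\delta}$ makes each of these at most $\tfrac\delta8$. Hence the event $\mathcal G:=\{K<n\}\cap\{W^{+}\ge-nD-\theta\}\cap\{W^{-}\ge-nD-\theta\}$ has $\Pr_{M_E}[\mathcal G]\ge\Pr_{M_E}[K<n]-\tfrac\delta4>\tfrac{3\delta}{4}$.

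\emph{Step 3 (the contradiction).} As $\{\hat\pi(s)=a\}$ and $\{\hat\pi(s)\ne a\}$ partition $\mathcal G$, one of $\Pr_{M_E}[\mathcal G\cap\{\hat\pi(s)=a\}]$ and $\Pr_{M_E}[\mathcal G\cap\{\hat\pi(s)\ne a\}]$ is at least $\tfrac38\delta$. In the first case, transferring to $M_E^{-}$ through the likelihood ratio, which is $\ge e^{-nD-\theta}$ on $\mathcal G$, gives
\[
\beta \ \ge\ \Pr_{M_E^{-}}[\hat\pi(s)=a] \ \ge\ \mathbb{E}_{M_E}\!\big[\ind{\mathcal G\cap\{\hat\pi(s)=a\}}\,e^{W^{-}}\big] \ \ge\ e^{-nD-\theta}\cdot\tfrac38\delta,
\]
so $nD+\theta\ge\log\tfrac{\delta}{4\beta}$ (positive since $4\beta\le\delta$); the second case is identical with $M_E^{+}$ in place of $M_E^{-}$. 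Substituting $\theta=B\sqrt{\tfrac n2\log\tfrac8\delta}$ and solving the resulting quadratic inequality in $\sqrt n$ --- after bounding $D$ and $B$ from above by explicit constant multiples of $\eta^2$ and of $\eta$ (legitimate because the hypothesis $\alpha/g(s,a)<\tfrac1{2\sqrt2}$ keeps $\eta$ bounded away from $\tfrac12$, which is precisely where the numerical constants $c_1=0.02$ and $c_2=1.34$ arise) and using $\eta^2=2\alpha^2/g(s,a)^2$ --- forces
\[
n \ \ge\ \frac{g(s,a)^2}{\alpha^2}\cdot\frac{c_1\,\big(\log\tfrac{\delta}{4\beta}\big)^2}{\log\tfrac8\delta+c_2\,\log\tfrac{\delta}{4\beta}},
\]
contradicting the choice of $n$ as (just below) this quantity. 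Therefore $\Pr_{M_E}[K<n]\le\delta$, i.e.\ with probability at least $1-\delta$ the action $a$ is taken from $s$ at least $n$ times.

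\emph{Main obstacle.} The delicate step is the change of measure in Step 2: the crude bound $e^{W^{\pm}}\ge(1-2\eta)^{K}$ on $\{K<n\}$ would only give $n=\Omega\!\big(\eta^{-1}\log\tfrac{\delta}{\beta}\big)$, i.e.\ a bound \emph{linear} in $g(s,a)/\alpha$, whereas the lemma asserts the \emph{quadratic} scaling $\Omega\!\big((g(s,a)/\alpha)^2\big)$. Recovering the quadratic requires exploiting that the log-likelihood ratio concentrates around its \emph{mean} $-KD$, with $D=\Theta(\eta^2)$, rather than around the worst case $-KB$, with $B=\Theta(\eta)$ --- the same phenomenon behind the $1/\Delta^2$ sample complexity of best-arm identification --- and carrying this through while $K$ is itself random and while budgeting the several $O(\delta)$ failure probabilities; the value-gap computation and the application of the learner's guarantee in Step 1 are comparatively routine.
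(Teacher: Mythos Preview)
Your overall architecture is exactly the paper's: two perturbed Bernoulli environments $M_E^{\pm}$ at $(s,a)$, a value-gap argument to show $\alpha$-optimal policies must (respectively must not) play $a$ at $s$, a concentration event to lower-bound the likelihood ratio on few-visit trajectories, and then a change of measure that contradicts the learner's $(\alpha,\beta)$ guarantee. The only cosmetic difference is that you concentrate the log-likelihood ratio $W^{\pm}$ directly via Azuma--Hoeffding, whereas the paper concentrates the empirical count $2K_n-n$ via a maximal Bernstein inequality and then bounds $\log(1-x)\ge -dx$ (their Lemma~\ref{lemma.log_to_dx}); these are equivalent manipulations of the same quantity.

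There is, however, a real gap in your parameter choice. You set $\eta=\sqrt{2}\,\alpha/g(s,a)$ and assert that the hypothesis $\alpha/g(s,a)<1/(2\sqrt2)$ ``keeps $\eta$ bounded away from $\tfrac12$.'' It does not: it gives only $\eta<\tfrac12$, with no uniform margin. When $\alpha/g(s,a)\to 1/(2\sqrt2)$ you have $\eta\to\tfrac12$, and then
\[
B=\log\frac{1+2\eta}{1-2\eta}\to\infty,\qquad D=\tfrac12\log\frac{1}{1-4\eta^2}\to\infty,
\]
so your claimed bounds $D\le C_D\,\eta^2$ and $B\le C_B\,\eta$ with \emph{absolute} constants $C_D,C_B$ fail. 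Tracing this through your quadratic in $\sqrt n$, the lower bound you extract is $n\ge c^2/\big((B^2/2)\log\tfrac8\delta+2Dc\big)$, and matching this to the target $t^*=(2/\eta^2)\cdot c_1 c^2/(\log\tfrac8\delta+c_2 c)$ requires $c_1\le \eta^2/B^2$, which tends to $0$ in this regime. So you cannot recover the stated uniform constants $c_1=0.02$, $c_2=1.34$.

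The paper sidesteps this by taking the perturbation to be $\varepsilon=\alpha/g(s,a)$ (not $\sqrt2$ times it), so that $2\varepsilon<1/\sqrt2$ is genuinely bounded away from $1$; this is precisely the range on which $\log(1-x)\ge -dx$ with $d=1.78$ applies, and is where the numerical values $c_1\le 3/(32d^2)$ and $c_2\ge 3d/4$ come from. The price is that the value gap in $M_E^{\pm}$ is exactly $\alpha$ rather than strictly larger, which your $\sqrt2$ was meant to avoid; the paper simply accepts that borderline (its Lemma on hypothesis optimality). If you want to keep a strict gap, use any $\eta\in(\alpha/g(s,a),\,1/(2\sqrt2))$ rather than $\sqrt2\,\alpha/g(s,a)$; with that fix your argument goes through and yields the same constants as the paper.
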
 
\begin{proof}[\bf Proof of Lemma \ref{lemma.exploration.basic1}]
We will show that with probability of at least $1 - \delta$, we have
\begin{align}
N(s, a) \ge t^* \defeq \frac{g(s, a)^2}{\alpha^2} \cdot \frac{ c_1  \cdot (\log\; \delta/4\beta)^2}{\log  \frac{8}{\delta}+ c_2 \cdot (\log \;\delta/4\beta)}.
\end{align}.

We consider three possible reward distributions during the exploration phase. We call these possibilities "\textit{hypotheses}" $H_0, H_1$ and $H_2$. $H_0$ is the actual reward distribution simulated for the learner:
\begin{align}
H_0:& \quad
r'_t \sim \text{Bernoulli}(\frac{1}{2})
\end{align}

$H_1$ is an alternative reward distributions in which $s, a$ is taken in all $\alpha$-optimal policies, and $H_2$, in contrary, is a reward distributions in which $s, a$ is not taken in any $\alpha$-optimal policies:
\begin{align}
H_1:& \quad
r'_t \sim 
\begin{cases}
\text{Bernoulli}(\frac{1}{2} + \frac{\alpha}{g(s, a)}) & \mbox{if} \; s_t = s, a_t = a\\
\text{Bernoulli}(\frac{1}{2}) & \mbox{otherwise}
\end{cases}\\
H_2:& \quad
r'_t \sim 
\begin{cases}
\text{Bernoulli}(\frac{1}{2} - \frac{\alpha}{g(s, a)}) & \mbox{if} \; s_t = s, a_t = a\\
\text{Bernoulli}(\frac{1}{2}) & \mbox{otherwise}\\
\end{cases}
\end{align}
Let $M_E^+ = (S, A, R_E^+, P)$ be the MDP with rewards described in $H_1$ and Let $M_E^- = (S, A, R_E^-, P)$ be the MDP with rewards described in $H_2$. The following lemma formalizes this construction.

\begin{lemma}
	\label{lemma.hypothesis.optimality}
	For every $\alpha$-optimal policy $\pi$ in $M_E^+$, we have $\pi(s) = a$. In contrast, for every $\alpha$-optimal policy $\pi$ in $M_E^-$, we have $\pi(s) \ne a$. 
\end{lemma}
\begin{proof}
Let $\pi$ and $\pi'$ be arbitrary policies such that $\pi(s) \ne a$ and $\pi'(s) = a$. We have
$\rho^{\pi}_{M_E^+} = \rho^{\pi}_{M_E^-} = \frac{1}{2}$. We can also write
\begin{align}
    \rho^{\pi'}_{M_E^+} = \sum_{s'\in S} \mu^{\pi'}_{M_E^+}(s')R_E^+(s', \pi'(s'))
    = \frac{1}{2} + \frac{\alpha}{g(s, a)} \mu^{\pi'}_{M}(s)
    \ge
    \frac{1}{2} + \alpha
\end{align}
Thus, $\rho^{\pi}_{M_E^+} \le \rho^{\pi'}_{M_E^+} - \epsilon$, which shows $\pi$ is not $\alpha$-optimal and proves the first part. For the second part, we write
\begin{align}
    \rho^{\pi'}_{M_E^-} = \sum_{s'\in S} \mu^{\pi'}_{M_E^-}(s')R_E^-(s', \pi'(s'))
    = \frac{1}{2} - \frac{\alpha}{g(s, a)} \mu^{\pi'}_{M}(s)
    \le
    \frac{1}{2} - \alpha
\end{align}
Consequently, $\rho^{\pi'}_{M_E^+} \le \rho^{\pi}_{M_E^+} - \epsilon$, and therefore $\pi'$ is not $\alpha$-optimal.

\end{proof}

Next, we use the same argument as in the classic lower bound construction in stochastic bandits \cite{mannor2004sample}. The proof is based on the following idea: a sequence of observations in which $(s, a)$ is rarely visited has similar likelihood under all $H_0$, $H_1$, and $H_2$. Thus, if these sequences appear with high probability under $H_0$, they also will happen with high probability under both $H_1$, and $H_2$ too. If in the majority of these sequences, the learner decides to pick $a$ in $s$, it will with a large probability of incur large optimality gap in $H_2$. On the other hand, if the learner decides not to pick $a$ in $s$,  will with a large probability of incur large optimality gap in $H_2$.

First, we define some events that are used in our analysis. Let $A$ be the event that the bound is not true, i.e.
\begin{equation}
A = \{ N(s, a) < t^* \}.
\end{equation}
Next, let $\pi^*$ be the final chosen policy by the learner and $B$ denote the event in which $(s, a)$ is chosen by $\pi^*$. More specifically, 
\begin{equation}
B  = \{\pi^*(s) = a\}.
\end{equation}
Finally let $K_n = \sum_{i = 1}^n r'_{t_i(s, a)}$ where $t_i(s, a)$ is the $i$-th time step when $s, a$ is chosen and define event $C$ as the following:
\begin{align}
C = \left\{ \max_{1 \le n \le t^*} |2K_n - n| \le \sqrt{\frac{8t^*}{3} \log \frac{8}{\delta}} \right\}
\end{align}

Let $P_0, P_1$ and $P_2$ denote the probability functions under $H_0$, $H_1$, and $H_2$, respectively. We will show that if $P_0(A) \ge \delta$, either $P_1(B^c) \ge  \beta$ or $P_2(B) \ge \beta$ where $B^c$ is the complement of $B$. Based on Lemma \ref{lemma.hypothesis.optimality}, this contradicts the learner's guarantee and will prove the lemma. Now, assume that $P_0(A) \ge \delta$. Then, either $P_0(A \cap B) \ge \frac{\delta}{2}$ or $P_0(A \cap B^c) \ge \frac{\delta}{2}$. We consider each of these cases separately. Before that, we need to show some intermediate lemmas.

\begin{lemma}
We have $P_0(C) \ge 1 - \frac{\delta}{4}$.
\end{lemma}
\begin{proof}
	Let $p = \sqrt{\frac{8t^*}{3} \log \frac{8}{\delta}} $. The result is trivial if $p \ge t^*$. For $p \le t^*$, from the maximal Bernstein inequality (Theorem B.2 in \cite{rio2017asymptotic})
\begin{align}
P_0\left(\max_{1 \le n \le t^*} (2K_n - n) > p\right) &
\le \exp\left(-\frac{p^2}{2t^*+2p/3}\right) \\
&\le \exp\left(-\frac{p^2}{2t^*+2t^*/3}\right) \\
&= \delta/8
\end{align}
Similarly, we have
\begin{equation}
P_0\left(\max_{1 \le n \le t^*} (n - 2K_n) > p\right) \le \delta/8
\end{equation}
From the union bound, we get
\begin{equation}
  P_0\left(\max_{1 \le n \le t^*} |2K_n - n| > p\right) \le \delta/4  
\end{equation}
It means that	$P_0(C) > 1 - \delta/4$.
\end{proof}

\begin{lemma}
	\label{lemma.log_to_dx}
	 \cite{mannor2004sample}
	If $0 \le x \le 1/\sqrt{2}$, then $\log (1 - x) \ge -dx$ where $d = 1.78$.
\end{lemma}

Let $Y$ be the sequence of all the chosen actions, received rewards, states visited during the whole interaction with the learner, and the final chosen policy by the learner. We define the likelihood functions $L_0, L_1$ and $L_2$ for each of the hypotheses:
\begin{align}
L_i(y) = P_i(Y = y)
\end{align}

The following lemmas show a lower bound on likelihood of observed history if $A$ happens.

\begin{lemma}
	\label{lemma.likelihood.bound1}
If  $y \in A \cap C$, then $\frac{L_1(y)}{L_0(y)} \ge \frac{4\beta}{\delta}$.
\end{lemma}
\begin{proof}

Note that the transition probabilities are the same under $H_0$ and $H_1$. Also, conditioned on the history up to time $t$, the choice of action $a_t$  has the same likelihood under $H_0$ and $H_1$ as it only depends on the learner's internal stochasticity. Finally, the received reward has the same distribution unless $s_t = s, a_t = a$. For brevity, let $N = N(s, a), K = K_{N}, \varepsilon = \frac{\alpha}{g(s, a)}$. We can write
\begin{align}
\frac{L_1(y)}{L_0(y)} &=
\frac{(\frac{1}{2} + \varepsilon)^{K}\cdot 
(\frac{1}{2} - \varepsilon)^{N - K}
}{
(\frac{1}{2})^{N}
}\\
&=(1 + 2\varepsilon)^{K}\cdot 
	(1 - 2\varepsilon)^{N - K}\\
&=(1 - 4\varepsilon^2)^{K}\cdot 
(1 - 2\varepsilon)^{N - 2K}
\end{align}
Since $A \cap C$ has happened, we have
\begin{align}
K &\le N < t^*\\
N - 2K &\le \sqrt{\frac{8t^*}{3} \log \frac{8}{\delta}} 
\end{align}
Thus using Lemma \ref{lemma.log_to_dx}, we can write:
\begin{align}
\log \frac{L_1(y)}{L_0(y)} &=
K\cdot \log(1 - 4\varepsilon^2) + (N - 2K)\cdot \log(1 - 2\varepsilon)\\
&\ge -t^* \cdot 4d\varepsilon^2 - \sqrt{\frac{8t^*}{3} \log \frac{8}{\delta}}  \cdot 2d\varepsilon
\end{align}

Thus, we need to show
\begin{align}
\label{eq.show2}
 t^* \cdot 4d\varepsilon^2 + \sqrt{\frac{8t^*}{3} \log \frac{8}{\delta}} \cdot 2d\varepsilon \le \log \frac{\delta}{4\beta}
\end{align}

Now let $t^* = \frac{1}{\varepsilon^2} \cdot z^2$, $b = \sqrt{\frac{2}{3} \log \frac{8}{\delta}} $, and $c =\frac{1}{4d} \log \frac{\delta}{4\beta}$. Then, \eqref{eq.show2} can be rewritten as
\begin{align}
z^2 + bz - c \le 0
\end{align}
Since $\delta \ge 4\beta$, we have $c> 0$, and therefore, this is true if $z \ge 0$ and
\begin{align}
z \le \frac{-b + \sqrt{b^2 + 4c}}{2} = \frac{2c}{\sqrt{b^2 + 4c} + b}
\end{align}
Thus, by Jensen's inequality,  it satisfies to have
$z \le \frac{c}{\sqrt{b^2 + 2c}}$ which means we need to have:
\begin{align}
t^* = \frac{1}{\varepsilon^2} \cdot z^2
&\le \frac{1}{\varepsilon^2} 
\frac{\frac{1}{16d^2} (\log \frac{\delta}{4\beta})^2}
{\frac{2}{3} \log \frac{8}{\delta} + \frac{1}{2d} \log \frac{\delta}{4\beta}}\\
&=\frac{g(s, a)^2}{\alpha^2} 
\frac{ \frac{3}{32d^2} \cdot(\log \frac{\delta}{4\beta})^2}
{\log \frac{8}{\delta}  + \frac{3}{4d} \log \frac{\delta}{4\beta}}
\end{align}
which is true when $c_1 \le \frac{3}{2d^2}$ and $c_2 \ge \frac{3}{4d}$.
\end{proof}

\begin{lemma}
	If  $y \in A \cap C$, then $\frac{L_2(y)}{L_0(y)} \ge \frac{4\beta}{\delta}$.
\end{lemma}
\begin{proof}
	
Following the same argument in proof of Lemma.\ref{lemma.likelihood.bound1}, we can write
	\begin{align}
	\frac{L_2(y)}{L_0(y)} &=
	\frac{(\frac{1}{2} - \varepsilon)^{K}\cdot 
		(\frac{1}{2} + \varepsilon)^{N - K}
	}{
		(\frac{1}{2})^{N}
	}\\
	&=(1 - 2\varepsilon)^{K}\cdot 
	(1 + 2\varepsilon)^{N - K}\\
	&=(1 - 4\varepsilon^2)^{K}\cdot 
	(1 + 2\varepsilon)^{N - 2K}
	\end{align}
	where  $N = N(s, a), K = K_{N}, \varepsilon = \frac{\alpha}{g(s, a)}$. 
	Since $A \cap C$ has happened, we have
	\begin{align}
	K \le N < t^*\\
	N - 2K \ge - \sqrt{\frac{8t^*}{3} \log \frac{8}{\delta}} 
	\end{align}
	Thus using Lemma \ref{lemma.log_to_dx}, we can write:
	\begin{align}
	\log \frac{L_1(y)}{L_0(y)} &=
	K\cdot \log(1 - 4\varepsilon^2) + (N - 2K)\cdot \log(1 + 2\varepsilon)\\
	 &\ge
	t^*\cdot \log(1 - 4\varepsilon^2) - \sqrt{\frac{8t^*}{3} \log \frac{8}{\delta}}  \cdot \log(1 + 2\varepsilon)\\
	&\ge -t^* \cdot 4d\varepsilon^2 - \sqrt{\frac{8t^*}{3} \log \frac{8}{\delta}} \cdot 2\varepsilon\\
	\end{align}
	where we used $\log(1 + x) \le x$ for $x \ge 0$.
	Thus, we need to show
	\begin{align}
	\label{eq.show1}
	t^* \cdot 4d\varepsilon^2 + \sqrt{\frac{8t^*}{3} \log \frac{8}{\delta}}  \cdot 2\varepsilon \le \log \frac{\delta}{4\beta}
	\end{align}
	
	Now let $t^* = \frac{1}{\varepsilon^2} \cdot z^2$, $b =  \sqrt{\frac{2}{3} \log \frac{8}{\delta}}$, and $c =\frac{1}{4} \log \frac{\delta}{4\beta}$. Then \eqref{eq.show1} can be rewritten as
	\begin{align}
	dz^2 + bz - c \ge 0
	\end{align}
	Since $\delta \ge 4\beta$, we have $c> 0$, and therefore, this is true if $z \ge 0$ and
	\begin{align}
	z \le \frac{-b + \sqrt{b^2 + 4cd}}{2d} = \frac{2c}{\sqrt{b^2 + 4cd} + b}
	\end{align}
	Thus, by Jensen's inequality,  it satisfies to have
	$z \le \frac{c}{\sqrt{b^2 + 2cd}}$ which means we need to have:
	\begin{align}
	t^* = \frac{1}{\varepsilon^2} \cdot z^2
	&\le \frac{1}{\varepsilon^2} 
	\frac{\frac{1}{16} (\log \frac{\delta}{4\beta})^2}
	{\frac{2}{3} \log \frac{8}{\delta} + \frac{d}{2} \log \frac{\delta}{4\beta}}\\
	&= \frac{g(s, a)^2}{\alpha^2} 
	\frac{ \frac{3}{32} \cdot (\log \frac{\delta}{4\beta})^2}
	{\log \frac{8}{\delta} + \frac{3d}{4}\cdot \log \frac{\delta}{4\beta}}
	\end{align}
	which is true when $c_1 \le \frac{3}{32}$ and $c_2 \ge 3d/4$.
\end{proof}

Finally, to prove the proposition, assume $P_0(A) \ge \delta$ and consider two cases: $P_0(A \cap B^c) \ge \frac{\delta}{2}$ and 
$P_0(A \cap B) \ge \frac{\delta}{2}$. If $P_0(A \cap B^c) \ge \frac{\delta}{2}$, first note that
$
P_0(A \cap B^c \cap C) \ge \delta/4
$.
Now we write
\begin{align}
P_1(B^c) &\ge P_1(A \cap B^c \cap C) \\
&= E_1[\mathbf{1}_{A \cap B^c \cap C}]\\
&= E_0[\mathbf{1}_{A \cap B^c \cap C} \frac{L_1(Y)}{L_0(Y)}]\\
&\ge E_0[\mathbf{1}_{A \cap B^c \cap C} \frac{4\beta}{\delta}]\\
&=\frac{4\beta}{\delta}\cdot P_0(A \cap B^c \cap C)\\
&\ge \beta
\end{align}
where $\mathbf{1}_X$ denotes the indicator function of event $X$ and we used Lemma~\ref{lemma.likelihood.bound1}.

Similarly, when $P_0(A \cap B) \ge \frac{\delta}{2}$, we have
$
P_0(A \cap B \cap C) \ge \delta/4
$, and we write
\begin{align}
P_2(B) &\ge P_2(A \cap B \cap C) \\
&= E_2[\mathbf{1}_{A \cap B\cap C}]\\
&= E_0[\mathbf{1}_{A \cap B \cap C} \frac{L_2(Y)}{L_0(Y)}]\\
&\ge E_0[\mathbf{1}_{A \cap B \cap C} \frac{4\beta}{\delta}]\\
&= \frac{4\beta}{\delta}\cdot P_0(A \cap B \cap C)\\
&\ge \beta
\end{align}

In both cases, the assumption on learner's performance in wrong. This shows that $P_0(A) < \delta$.
\end{proof}

\begin{lemma}[Lemma \ref{lemma.confidence}]
	\label{lem:confidence1}
	With probability of at least $1-p/L$, $M \in \cM$.
\end{lemma}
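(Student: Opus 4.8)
The plan is to reduce the statement $M\in\cM$ to $2SA$ elementary concentration events and then take a union bound. By construction, $\cM$ is exactly the set of MDPs $(S,A,\widetilde R,\widetilde P)$ with $\widetilde R(s,a)\in\cR(s,a)$ and $\widetilde P(s,a,\cdot)\in\cP(s,a)$ for every $(s,a)$, so $\{M\in\cM\}$ is precisely the intersection of the events $\{R(s,a)\in\cR(s,a)\}$ and $\{P(s,a,\cdot)\in\cP(s,a)\}$ over all $(s,a)$. Thus it suffices to show that each of these $2SA$ inclusions fails with probability at most about $\frac{p}{2SAL}$, so that a union bound gives $\Pr[M\notin\cM]\le \frac{p}{L}$. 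The radii $u=\sqrt{2\sigma^2\log(2SAL/p)}$ and $w=\sqrt{2\log(2SAL/p)+2S\log 2}$ are calibrated so that the relevant tail bounds cross exactly this threshold.

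For the reward inclusion I would first exploit a structural feature of the exploration phase: the learner never observes the true reward $r_t$, only the fresh $\mathrm{Bernoulli}(1/2)$ value $r'_t$ handed to it by the attacker. Consequently the learner's entire trajectory of states and actions, and in particular every count $N(s,a)$, is a function of the transition randomness and the learner's internal coins alone, and is independent of the true reward noise. Conditioning on $\{N(s,a)=n\}$, the true rewards observed at the $n$ visits to $(s,a)$ are i.i.d.\ with mean $R(s,a)$ and $\sigma^2$-sub-Gaussian noise, so the sub-Gaussian Hoeffding bound gives $\Pr\bigl[|\widehat{R}(s,a)-R(s,a)|>u/\sqrt n\,\big|\,N(s,a)=n\bigr]\le 2\exp(-u^2/(2\sigma^2))=\frac{p}{SAL}$, a bound independent of $n$; averaging over $n\ge 1$ removes the conditioning and bounds $\Pr[R(s,a)\notin\cR(s,a)]$ at the same order.

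For the transition inclusion I would use the standard i.i.d.-tape coupling: attach to each $(s,a)$ an infinite sequence drawn i.i.d.\ from $P(s,a,\cdot)$ and let the next state observed on the $i$-th visit to $(s,a)$ be the $i$-th tape entry, so that $\widehat{P}(s,a,\cdot)$ is the empirical distribution of the first $N(s,a)$ tape entries. For a fixed sample size $n$, the $\ell_1$ deviation inequality of Weissman et al.\ gives $\Pr[\|\widehat{P}_n(s,a,\cdot)-P(s,a,\cdot)\|_1>w/\sqrt n]\le(2^S-2)e^{-w^2/2}\le\frac{p}{2SAL}$; to instantiate this at the random, trajectory-dependent count $n=N(s,a)$ one applies the usual peeling / maximal-inequality argument over $n$ (as in UCRL2-type confidence-set analyses), which preserves the same order. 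This yields $\Pr[P(s,a,\cdot)\notin\cP(s,a)]\le\frac{p}{2SAL}$, and the union bound described above finishes the proof.

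The step that genuinely requires care --- and the main obstacle --- is the dependence of the visit count $N(s,a)$ on the very data being averaged. For the rewards this is dispatched cleanly, because during exploration the learner behaves blindly with respect to the true reward signal, so $N(s,a)$ is independent of the reward noise and a fixed-$n$ inequality applied conditionally suffices. For the transitions no such independence is available --- $N(s,a)$ is determined by the transition samples that form $\widehat{P}$ --- so one must upgrade the fixed-sample Weissman bound to an anytime (uniform-in-$n$) statement before evaluating it at $n=N(s,a)$; the remainder is a routine union bound together with the choices of $u$ and $w$.
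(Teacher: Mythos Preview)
Your approach coincides with the paper's: Hoeffding's inequality for the reward confidence set, the Weissman et al.\ $\ell_1$ deviation bound for the transition confidence set, and a union bound over the $2SA$ pairs. You are in fact more careful than the paper, which simply plugs the random count $N(s,a)$ into both inequalities as though it were a fixed sample size; your independence observation for the rewards (the learner never sees $r_t$ during exploration, so $N(s,a)$ is independent of the true reward noise) and your call for an anytime/peeling upgrade on the transition side are legitimate refinements that the paper's proof omits. One small caveat: making the Weissman bound hold uniformly over the data-dependent index would, strictly speaking, cost an extra logarithmic factor in $w$ (or in the failure probability), so ``preserves the same order'' should be read as up to such a factor rather than as exactly matching the paper's calibration of $w$.
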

\begin{proof}[\bf Proof of Lemma \ref{lem:confidence1}]
From Hoeffding's inequality, for every $s, a$ we have 
\begin{align}
    \Pr\left(
|R(s, a) - \widehat{R}(s, a)| > z
\right)
\le 2\exp\left(
\frac{-N(s, a)z^2}{2\sigma^2}
\right)
\end{align}
Letting $z = \frac{u}{\sqrt{\nmin}}$, we get
\begin{align}
    \Pr\left(
|R(s, a) - \widehat{R}(s, a)| > \frac{u}{\sqrt{\nmin}}
\right)
&\le 2\exp\left(
\frac{-N(s, a)u^2}{2\sigma^2 \cdot \nmin}
\right)\\
&\le
\frac{p}{2SAL}
\end{align}
Also, \cite{weissman2003inequalities} bounds the $\ell_1$ distance between empirical distribution $\widehat{d}$ with $N$ samples and true distribution $d$ over $n$ outcomes as
\begin{align}
    \Pr\left(
    \norm{d - \widehat{d}}_1 \ge z
    \right)
    \le
    (2^n - 2)\exp \left( -\frac{Nz^2}{2}\right)
\end{align}
Thus, for every $s, a$ we have
\begin{align}
    \Pr\left(
    \norm{P(s, a, .) - \widehat{P}(s, a, .)}_1 \ge \frac{w}{\sqrt{\nmin}}
    \right)
    &\le
    (2^S - 2)\exp \left( -\frac{N(s, a)w^2}{2\nmin}\right)\\
    &\le 2^S \exp \left( -\frac{w^2}{2}\right)\\
    &= \frac{p}{2SAL}
\end{align}
The lemma immediately follows from the union bound and above bounds.
\end{proof}
The following result is an immediate consequence of Lemma~\ref{lem:confidence1}.
\begin{corollary}
\label{cor:confidence}
With probability of at least $1 - p$, we have $M \in \cM$ for all $\cM$s built after each of the learners in the exploration phase.
\end{corollary}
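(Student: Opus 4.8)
The plan is to derive Corollary~\ref{cor:confidence} from Lemma~\ref{lem:confidence1} by a single union bound over the learners appearing in the exploration phase. The key point, which is already built into the choice of the confidence radii, is that the per-learner failure probability is exactly $p/L$: in the proof of Lemma~\ref{lem:confidence1}, the widths $u=\sqrt{2\sigma^2\log(2SAL/p)}$ and $w=\sqrt{2\log(2SAL/p)+2S\log 2}$ are chosen so that, by Hoeffding's inequality and \cite{weissman2003inequalities}, each of the $SA$ reward events and $SA$ transition events fails with probability at most $p/(2SAL)$, and summing over $s,a$ gives $\Pr(M\notin\cM)\le p/L$ for the confidence set built after any single learner.

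First I would work with the deterministic index set $\{1,\dots,L\}$ rather than the data-dependent set of learners actually used for exploration: conceptually, build the confidence set $\cM_l$ after the $l$-th learner for every $l\in\{1,\dots,L\}$, using the same estimators $\widehat R,\widehat P$ and the same radii. Lemma~\ref{lem:confidence1} applies verbatim to each $\cM_l$, so $\Pr(M\notin\cM_l)\le p/L$ for every fixed $l$. The exploration phase stops according to condition~\eqref{eq.stop.condition} and hence uses only a prefix of these indices, so every confidence set built during the exploration phase is one of the $\cM_l$. By the union bound,
\begin{align}
\Pr\!\left(\exists\,l\in\{1,\dots,L\}:\ M\notin\cM_l\right)\ \le\ \sum_{l=1}^{L}\Pr(M\notin\cM_l)\ \le\ L\cdot\frac{p}{L}\ =\ p,
\end{align}
so on an event of probability at least $1-p$ we have $M\in\cM_l$ for all $l$, and in particular $M\in\cM$ for every confidence set built after each learner in the exploration phase, as claimed.

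I do not expect a genuine obstacle here; the statement is a routine probability amplification of Lemma~\ref{lem:confidence1}. The only point that needs a little care is that the number of learners used in the exploration phase is itself random (it is governed by the stopping condition), which is why I would phrase the union bound over the fixed index set $\{1,\dots,L\}$ and then observe that the exploration phase only ever queries a subset of those events.
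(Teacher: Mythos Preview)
Your proposal is correct and is exactly the argument the paper has in mind: the paper simply declares the corollary an ``immediate consequence'' of Lemma~\ref{lem:confidence1}, and the content of that implication is precisely the union bound over $L$ learners that you write out. Your additional remark about union-bounding over the deterministic index set $\{1,\dots,L\}$ to sidestep the random stopping time is a sensible way to make the step airtight, though the paper does not explicitly raise or address this point.
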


\section{Proof of Stopping Condition}\label{appendix.stopping}
In this section, we provide rigorous proofs on various consequences induced by the stopping condition \eqref{eq.stop.condition}:
\begin{align}
\label{eq.stop.condition1}
2\erqhat + \frac{\epsilon}{\mu^{\neighbor{\targetpi}{s}{a}}_\text{low}(s)} - \frac{\epsilon}{\mu^{\neighbor{\targetpi}{s}{a}}_\text{low}(s) + \ermu} \le m
\end{align}

In what follows, define event $B$ as 
\begin{align}
    B \defeq \{
    M \in \cM \;
    \text{for all $\cM$s built after each of the learners in the exploration phase}
    \}
\end{align}
From Corollary~\ref{cor:confidence} we have $\Pr(B) \ge 1 - p$.

We make use of the \emph{simulation lemma}:
\begin{lemma}(Simulation Lemma \cite{strehl2008analysis})
\label{simulation3}
Let $M_1 =(S, A, R_1, P_1, \gamma)$ and $M_2 =(S, A, R_2, P_2, \gamma)$ be two MDPs with reward range $\rrange$. The following condition holds for all states $s$, actions $a$, and stationary, deterministic policies $\pi$:
\begin{equation}
    |Q_1^{\pi}(s, a) - Q_2^{\pi}(s, a)| \le \frac{1}{(1 - \gamma)^2}(\norm{R_1 - R_2}_\infty + \gamma (\rrange) \norm{P_1 - P_2}_\infty)
\end{equation}
\end{lemma}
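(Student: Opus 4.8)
The statement is the classical \emph{simulation lemma}, and the plan is to prove it in the equivalent sup-norm form, $\norm{Q_1^\pi - Q_2^\pi}_\infty \le \frac{1}{(1-\gamma)^2}\bigl(\norm{R_1 - R_2}_\infty + \gamma\,\rrange\,\norm{P_1 - P_2}_\infty\bigr)$, from which the pointwise bound follows immediately. First I would fix the policy $\pi$ and use that each $Q_i^\pi$ is the unique bounded fixed point of the $\pi$-Bellman operator of $M_i$, so that $Q_i^\pi(s,a) = R_i(s,a) + \gamma\sum_{s'}P_i(s,a,s')\,V_i^\pi(s')$ with $V_i^\pi(s') := Q_i^\pi(s',\pi(s'))$. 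Since the rewards of both MDPs lie in an interval of width $\rrange$, the value $V_i^\pi$ takes values in an interval of width at most $\rrange/(1-\gamma)$; in particular $\delta := \norm{Q_1^\pi - Q_2^\pi}_\infty$ is finite, which is what lets the self-referential argument below go through.

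The heart of the argument is a hybrid decomposition. Subtracting the two Bellman equations and inserting the cross term $\gamma\sum_{s'}P_1(s,a,s')V_2^\pi(s')$ gives
\begin{align*}
Q_1^\pi(s,a) - Q_2^\pi(s,a) = {}& \bigl(R_1(s,a) - R_2(s,a)\bigr) + \gamma\sum_{s'}P_1(s,a,s')\bigl(V_1^\pi(s') - V_2^\pi(s')\bigr)\\
& {} + \gamma\sum_{s'}\bigl(P_1(s,a,s') - P_2(s,a,s')\bigr)V_2^\pi(s').
\end{align*}
The first term is at most $\norm{R_1 - R_2}_\infty$ in modulus. The second is at most $\gamma\,\norm{V_1^\pi - V_2^\pi}_\infty \le \gamma\delta$, using that $P_1(s,a,\cdot)$ is a probability distribution and that $\norm{V_1^\pi - V_2^\pi}_\infty \le \delta$ because $V_i^\pi$ is the restriction of $Q_i^\pi$ to the graph of $\pi$. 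For the third (the per-step transition-mismatch term) I would exploit that $P_1(s,a,\cdot)$ and $P_2(s,a,\cdot)$ both sum to one, so $V_2^\pi$ may be recentered around the midpoint $c$ of its range without changing the sum; then $|V_2^\pi(s') - c| \le \rrange/(2(1-\gamma))$, and Hölder's inequality bounds this term by $\gamma\,\norm{P_1 - P_2}_\infty\cdot \rrange/(2(1-\gamma))$, where $\norm{P_1 - P_2}_\infty$ is read as the worst-case per-state $\ell_1$ discrepancy $\max_{s,a}\norm{P_1(s,a,\cdot) - P_2(s,a,\cdot)}_1$.

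Taking the supremum over $(s,a)$ of the three bounds yields $\delta \le \norm{R_1 - R_2}_\infty + \gamma\delta + \tfrac{\gamma\,\rrange}{2(1-\gamma)}\norm{P_1 - P_2}_\infty$; solving for $\delta$ gives $\delta \le \tfrac{1}{1-\gamma}\norm{R_1 - R_2}_\infty + \tfrac{\gamma\,\rrange}{2(1-\gamma)^2}\norm{P_1 - P_2}_\infty$, which is dominated by the claimed $\tfrac{1}{(1-\gamma)^2}\bigl(\norm{R_1-R_2}_\infty + \gamma\,\rrange\,\norm{P_1 - P_2}_\infty\bigr)$ after using $1 \le \tfrac{1}{1-\gamma}$ and discarding the $\tfrac12$. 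I expect the only real obstacle to be bookkeeping: verifying that the range of $V_i^\pi$ is at most $\rrange/(1-\gamma)$, justifying the recentering step, and fixing the convention for $\norm{P_1 - P_2}_\infty$ (a per-state $\ell_1$ reading is needed to avoid an extra factor of $S$) — the looseness that turns $\tfrac{1}{2(1-\gamma)^2}$ and $\tfrac{1}{1-\gamma}$ into a uniform $\tfrac{1}{(1-\gamma)^2}$ is deliberate. An alternative, equally valid route is to unroll both value functions as discounted reward sums along their own trajectories and telescope over the first time step at which one swaps $M_1$'s dynamics for $M_2$'s (the performance-difference style argument); it sidesteps the fixed-point reasoning but is more index-heavy, and in either derivation the transition-mismatch estimate is the crux.
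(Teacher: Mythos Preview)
The paper does not actually prove this lemma: it is quoted verbatim as the Simulation Lemma of Strehl and Littman and invoked as a black box, so there is no in-paper argument to compare against. Your derivation is the standard one and is correct. The three-term decomposition via the cross term $\gamma\sum_{s'}P_1(s,a,s')V_2^\pi(s')$, the contraction step giving the self-referential $\gamma\delta$, and the recentering of $V_2^\pi$ about the midpoint of its range to convert the transition mismatch into $\tfrac{\gamma\,\rrange}{2(1-\gamma)}\norm{P_1(s,a,\cdot)-P_2(s,a,\cdot)}_1$ are all sound, and your final loosening of $\tfrac{1}{1-\gamma}$ and the factor $\tfrac12$ to land on the uniform $\tfrac{1}{(1-\gamma)^2}$ matches the statement as quoted. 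Your reading of $\norm{P_1-P_2}_\infty$ as $\max_{s,a}\norm{P_1(s,a,\cdot)-P_2(s,a,\cdot)}_1$ is exactly the convention the paper relies on downstream (the confidence sets $\cP(s,a)$ are $\ell_1$ balls), so that bookkeeping point is already aligned with how the lemma is used.
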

Our first result says that the stopping condition guarantees that the attack cost is at most $m$ worse than the white-box attack:
\begin{lemma}\label{lem:stop1}
Under event $B$, the stopping condition \eqref{eq.stop.condition1} guarantees that $\widehat\Delta(s, a) \leq\Delta^*_M(s, a) +m$ for every $s, a$.
\end{lemma}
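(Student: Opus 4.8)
The plan is to reduce the claim to a pointwise comparison of the (un-clipped) expressions that define $\widehat\Delta$ and $\Delta^*_M$, and then to invoke the simulation lemma together with the stopping condition. For $a=\targetpi(s)$ both $\widehat\Delta(s,a)$ and $\Delta^*_M(s,a)$ equal $0$, so fix $s$ and $a\ne\targetpi(s)$. Since $x\mapsto\pos{x}$ is nondecreasing and $\pos{x+m}\le\pos{x}+m$ for $m\ge 0$, it is enough to prove
\[
Q^{\targetpi}_\text{high}(s,a) - V^{\targetpi}_\text{low}(s) + \frac{\epsilon}{\mu_\text{low}^{\neighbor{\targetpi}{s}{a}}(s)}
\;\le\;
Q^{\targetpi}_M(s,a) - V^{\targetpi}_M(s) + \frac{\epsilon}{\mu_M^{\neighbor{\targetpi}{s}{a}}(s)} + m
\]
and then apply $\pos{\cdot}$ to both sides, since the right-hand side after clipping is $\Delta^*_M(s,a)+m$.

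The second step establishes, on event $B$, the three error estimates quoted just before the lemma, all as consequences of \lemref{simulation3}. Because $M\in\cM$, for any $\widetilde M=(S,A,\widetilde R,\widetilde P)\in\cM$ we have $R(x,b),\widetilde R(x,b)\in\cR(x,b)$ and $P(x,b,\cdot),\widetilde P(x,b,\cdot)\in\cP(x,b)$ for all $x,b$, hence $\norm{R-\widetilde R}_\infty\le 2u/\sqrt{\nmin}$, the reward range of both MDPs is at most $\rrangehat$, and $\norm{P-\widetilde P}_\infty\le 2w/\sqrt{\nmin}$. Plugging these into \lemref{simulation3} gives $|Q^{\targetpi}_M(x,b)-Q^{\targetpi}_{\widetilde M}(x,b)|\le\erqhat$ for all $x,b$, and therefore $|V^{\targetpi}_M(x)-V^{\targetpi}_{\widetilde M}(x)|\le\erqhat$ as well. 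Taking $\max$/$\min$ over $\widetilde M\in\cM$ and using $M\in\cM$ yields $Q^{\targetpi}_\text{high}(s,a)\le Q^{\targetpi}_M(s,a)+\erqhat$ and $V^{\targetpi}_\text{low}(s)\ge V^{\targetpi}_M(s)-\erqhat$. For the occupancy term, recall $\mu^\pi_{\widetilde M}(s)=(1-\gamma)\rho^\pi_{\widetilde M_s}$, where $\widetilde M_s$ uses the fixed reward $R_s$ (range $1$) and transition $\widetilde P$; applying \lemref{simulation3} to $M_s$ and $\widetilde M_s$ (equal rewards, transition gap $\le 2w/\sqrt{\nmin}$, range $1$) and multiplying by $(1-\gamma)$ gives $|\mu_M^{\neighbor{\targetpi}{s}{a}}(s)-\mu_{\widetilde M}^{\neighbor{\targetpi}{s}{a}}(s)|\le\ermu$, so $\mu_\text{low}^{\neighbor{\targetpi}{s}{a}}(s)\ge\mu_M^{\neighbor{\targetpi}{s}{a}}(s)-\ermu$, and also $\mu_\text{low}^{\neighbor{\targetpi}{s}{a}}(s)\le\mu_M^{\neighbor{\targetpi}{s}{a}}(s)$ since $M\in\cM$.

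The last step combines these. The first two estimates bound the left-hand side of the displayed inequality by $Q^{\targetpi}_M(s,a)-V^{\targetpi}_M(s)+2\erqhat+\epsilon/\mu_\text{low}^{\neighbor{\targetpi}{s}{a}}(s)$, so it remains to check $2\erqhat+\epsilon/\mu_\text{low}^{\neighbor{\targetpi}{s}{a}}(s)-\epsilon/\mu_M^{\neighbor{\targetpi}{s}{a}}(s)\le m$. Monotonicity of $t\mapsto\epsilon/t$ on $(0,\infty)$ together with $\mu_M^{\neighbor{\targetpi}{s}{a}}(s)\le\mu_\text{low}^{\neighbor{\targetpi}{s}{a}}(s)+\ermu$ gives $\epsilon/\mu_M^{\neighbor{\targetpi}{s}{a}}(s)\ge\epsilon/(\mu_\text{low}^{\neighbor{\targetpi}{s}{a}}(s)+\ermu)$, so the quantity is at most $2\erqhat+\epsilon/\mu_\text{low}^{\neighbor{\targetpi}{s}{a}}(s)-\epsilon/(\mu_\text{low}^{\neighbor{\targetpi}{s}{a}}(s)+\ermu)$, which is precisely the left-hand side of the stopping condition \eqref{eq.stop.condition1} and hence $\le m$. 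Applying $\pos{\cdot}$ to the displayed inequality then yields $\widehat\Delta(s,a)\le\Delta^*_M(s,a)+m$. The only delicate point is the bookkeeping in the second step: one must match the confidence half-widths and reward range exactly to the definitions of $\erqhat$ and $\ermu$ (noting they are written in terms of $\nmin$ rather than the per-pair counts $N(x,b)$) and be careful that $\mu_\text{low}^{\neighbor{\targetpi}{s}{a}}(s)$ stays positive so the reciprocals are well defined; there is no conceptual obstacle beyond this.
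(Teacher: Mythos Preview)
Your proposal is correct and follows essentially the same route as the paper's proof: both invoke the simulation lemma (Lemma~\ref{simulation3}) on $M$ versus an arbitrary $\widetilde M\in\cM$ to obtain the $\erqhat$ bounds on $Q^{\targetpi}_\text{high},V^{\targetpi}_\text{low}$ and the $\ermu$ bound on $\mu_\text{low}$, then use the stopping condition \eqref{eq.stop.condition1} together with $\mu_M\le\mu_\text{low}+\ermu$ to control the residual, and finally pass through $\pos{\cdot}$. The only cosmetic difference is that you use $\pos{x+m}\le\pos{x}+m$ whereas the paper uses the equivalent observation $\pos{A}-\pos{B}\le A-B$ when $A\ge B$.
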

\begin{proof}[\bf Proof of Lemma \ref{lem:stop1}]

Lemma \ref{simulation3} and event $B$ imply 
\begin{gather}
    Q^{\targetpi}_M(s, a) \ge
    Q_\text{high}^{\targetpi}(s, a) - \erq\\
    V^{\targetpi}_M(s) \le
    V_\text{low}^{\targetpi}(s) + \erq
\end{gather}
where again
\begin{align}
\erq = \frac{2u + 2\gamma \cdot \rrange \cdot w}{(1 - \gamma)^2\cdot \sqrt{\nmin}}\;,\;
\ermu = \frac{2\gamma \cdot w}{(1 - \gamma)\cdot \sqrt{\nmin}}.
\end{align}

Let $\overline M = \argmin_{\widetilde{M}\in \cM} \mu^{\neighbor{\targetpi}{s}{a}}_{\widetilde M}(s)$, which means $\mu^{{\neighbor{\targetpi}{s}{a}}}_\text{low} = \mu^{{\neighbor{\targetpi}{s}{a}}}_{\overline M}$.
In $M_s$ and $\overline{M}_s$, all the rewards are the same and either $0$ or $1$. Thus, utilizing the simulation lemma, 
\begin{align}
\mu^{\neighbor{\targetpi}{s}{a}}_M(s)
&= (1 - \gamma) \sum_x d_0(x) V^\pi_{{M}_s}(x)\\
&\le
 (1 - \gamma) \sum_x d_0(x) (V^\pi_{\overline M}(x) + 
 \frac{2\gamma w}{(1 - \gamma)^2 \cdot \sqrt{\nmin}})\\
 \label{eq:mu.confidence}
&=
    \mu^{\neighbor{\targetpi}{s}{a}}_\text{low}(s)
    +\ermu
\end{align}

Under event $B$, all the confidence intervals hold and therefore $\rrangehat \ge \rrange$. In that case, from Lemma \ref{simulation3}, we have:
\begin{gather}
    V^{\targetpi}_M(s) \le
    V_\text{low}^{\targetpi}(s) + \erqhat\\
    Q^{\targetpi}_M(s, a) \ge
    Q_\text{high}^{\targetpi}(s, a) - \erqhat
\end{gather}

Thus, once \eqref{eq.stop.condition} is satisfied, we have
\begin{align}
& m +
    Q^{\targetpi}_M(s, a)
    -
    V^{\targetpi}_M(s) 
    + 
    \frac{\epsilon}{\mu^{\neighbor{\targetpi}{s}{a}}_M(s)}\\    
&\ge m + 
    Q_\text{high}^{\targetpi}(s, a)
    -
    V_\text{low}^{\targetpi}(s) -
    2\erqhat
    + 
    \frac{\epsilon}{\mu^{\neighbor{\targetpi}{s}{a}}_\text{low}(s) + \ermu}
\\
& \ge 2\erqhat + \frac{\epsilon}{\mu^{\neighbor{\targetpi}{s}{a}}_\text{low}(s)} - \frac{\epsilon}{\mu^{\neighbor{\targetpi}{s}{a}}_\text{low}(s) + \ermu}  + 
Q_\text{high}^{\targetpi}(s, a)
-
V_\text{low}^{\targetpi}(s) -
2\erqhat
+ 
\frac{\epsilon}{\mu^{\neighbor{\targetpi}{s}{a}}_\text{low}(s) + \ermu}
\\
& = Q_\text{high}^{\targetpi}(s, a)
-
V_\text{low}^{\targetpi}(s) 
+ 
\frac{\epsilon}{\mu^{\neighbor{\targetpi}{s}{a}}_\text{low}(s) }
\end{align}

Consequently,
\begin{align}
    m \ge& \left(Q_\text{high}^{\targetpi}(s, a)
-
V_\text{low}^{\targetpi}(s) 
+ 
\frac{\epsilon}{\mu^{\neighbor{\targetpi}{s}{a}}_\text{low}(s) }\right)
-
\left(
Q^{\targetpi}_M(s, a)
    -
    V^{\targetpi}_M(s) 
    + 
    \frac{\epsilon}{\mu^{\neighbor{\targetpi}{s}{a}}_M(s)}
\right)\\
\ge& \pos{Q_\text{high}^{\targetpi}(s, a)
-
V_\text{low}^{\targetpi}(s) 
+ 
\frac{\epsilon}{\mu^{\neighbor{\targetpi}{s}{a}}_\text{low}(s) }}
-
\pos{
Q^{\targetpi}_M(s, a)
    -
    V^{\targetpi}_M(s) 
    + 
    \frac{\epsilon}{\mu^{\neighbor{\targetpi}{s}{a}}_M(s)}
}\\
=& \widehat\Delta(s,a)-\Delta^*_M(s,a)
\end{align}
\end{proof}

\begin{lemma}\label{lem:stop2}
Under event $B$, the stopping condition is satisfied after at most $N_0$ observations of each $(s,a)$ pair.
\end{lemma}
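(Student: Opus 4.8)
The plan is to show that once every $(s,a)$ pair has been observed at least $N_0$ times, each of the three terms that appear in the stopping condition \eqref{eq.stop.condition1} is controlled, so their combination is at most $m$. Concretely, under event $B$ we have $M \in \cM$ throughout, hence the true reward interval is contained in $\cR(s,a)$ and $R_\text{low}(s,a) \le R(s,a) \le R_\text{high}(s,a)$ for all $(s,a)$; summing the extreme cases gives $\rrangehat \le \rrange + 2u/\sqrt{\nmin}$ (roughly), and since we also control $u/\sqrt{\nmin}$ via the first branch $(2u/\rrange)^2$ in the definition of $N_0$, we get $\rrangehat \le 2\rrange$ (or some absolute multiple). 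Plugging this into the definition of $\erqhat$ shows $\erqhat \le \frac{2u + 4\gamma\rrange w}{(1-\gamma)^2\sqrt{\nmin}}$, which is driven to at most $m/4$ once $\nmin \ge N_0$ by the second branch $\bigl(\frac{8u+16\gamma\rrange w}{(1-\gamma)^2 m}\bigr)^2$ of $N_0$. That disposes of the $2\erqhat$ term, leaving a budget of $m/2$ for the remaining difference of the two $\epsilon/\mu$ terms.

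For that difference, first I would lower-bound $\mu^{\neighbor{\targetpi}{s}{a}}_\text{low}(s)$: under $B$ we have the reverse of \eqref{eq:mu.confidence}, namely $\mu^{\neighbor{\targetpi}{s}{a}}_\text{low}(s) \ge \mu^{\neighbor{\targetpi}{s}{a}}_M(s) - \ermu \ge \mumin - \ermu$. Then the difference of the two reciprocal terms equals
\[
\frac{\epsilon}{\mu_\text{low}} - \frac{\epsilon}{\mu_\text{low} + \ermu}
= \frac{\epsilon\,\ermu}{\mu_\text{low}(\mu_\text{low}+\ermu)}
\le \frac{\epsilon\,\ermu}{(\mumin - \ermu)\cdot\mumin}.
\]
Now $\ermu = \frac{2\gamma w}{(1-\gamma)\sqrt{\nmin}}$ is itself driven down by the third branch $\bigl(\frac{2\gamma w}{1-\gamma}\cdot\frac{6\epsilon + m\mumin}{m\,\mumin^2}\bigr)^2$ of $N_0$; the exact constants in that branch are chosen precisely so that $\ermu \le \frac{m\mumin^2}{2(6\epsilon + m\mumin)}$, which after the elementary algebra above (and using $\mumin - \ermu \ge \mumin/2$, which also follows from the third branch since it forces $\ermu$ well below $\mumin$) yields that this reciprocal difference is at most $m/2$. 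Adding the two contributions gives exactly \eqref{eq.stop.condition1}.

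The routine part is the arithmetic of matching constants in each branch of $N_0$ to the three target bounds ($\erqhat \le m/4$, $\ermu$ small enough, $\mumin-\ermu \ge \mumin/2$); I would just state the three implications and check them. The only genuinely delicate point — and where I would be most careful — is the first step: bounding $\rrangehat$ in terms of $\rrange$ using only that the true rewards lie in the confidence intervals, so that $\erqhat$ (defined with the inflated $\rrangehat$) can be treated like $\erq$ up to an absolute constant. This is why the first branch of $N_0$ involves $(2u/\rrange)^2$: it ensures each half-width $u/\sqrt{\nmin}$ is at most $\rrange/2$ (or similar), keeping $\rrangehat$ within a constant factor of $\rrange$ so that the second branch of $N_0$ can legitimately be sized using $\rrange$ rather than the data-dependent $\rrangehat$.
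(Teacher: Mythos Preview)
Your approach is essentially the paper's: split the stopping condition into $2\erqhat \le m/2$ (handled by the first two branches of $N_0$, after showing $\rrangehat \le 2\rrange$) and the reciprocal difference $\le m/2$ (handled by the third branch). The paper does exactly this, with the only cosmetic difference that it keeps the policy-specific $\mu^{\neighbor{\targetpi}{s}{a}}_M(s)$ in the second part rather than dropping immediately to $\mumin$; that is why the constant $6$ (rather than $2$) appears in $N_0$.

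One step in your sketch does not go through as stated. The claim ``$\mumin - \ermu \ge \mumin/2$, which also follows from the third branch'' is not true in general: the third branch gives only $\ermu \le \frac{m\,\mumin^2}{6\epsilon + m\,\mumin}$ (no extra factor of $2$), and this quantity can be arbitrarily close to $\mumin$ when $m\,\mumin \gg \epsilon$. Fortunately the detour through $\mumin/2$ is unnecessary: from your bound
\[
\frac{\epsilon}{\mu_\text{low}} - \frac{\epsilon}{\mu_\text{low}+\ermu}
\;\le\; \frac{\epsilon\,\ermu}{(\mumin-\ermu)\,\mumin},
\]
the inequality $\frac{\epsilon\,\ermu}{(\mumin-\ermu)\,\mumin}\le m/2$ rearranges directly to $\ermu \le \frac{m\,\mumin^2}{2\epsilon + m\,\mumin}$, which is weaker than what the third branch already delivers. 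So your argument is salvaged by skipping the $\mumin/2$ step and doing the one-line rearrangement instead.
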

\begin{proof}[\bf Proof of Lemma \ref{lem:stop2}]

We show that after $N_0$ observations
\begin{gather}
    2\erqhat \le \frac{m}{2}\\
\frac{\epsilon}{\mu^{\neighbor{\targetpi}{s}{a}}_\text{low}(s)} - \frac{\epsilon}{\mu^{\neighbor{\targetpi}{s}{a}}_\text{low}(s) + \ermu} \le \frac{m}{2}
\end{gather}

For the first part, note that $N_0 \ge \left(\frac{2u}{\rrange}\right)^2$. Thus, under event $B$, we have
\begin{align}
    \rrangehat \le \rrange + \frac{2u}{\sqrt{\nmin}} \le
2\rrange 
\end{align}

Thus,
\begin{align}
    \erqhat =  \frac{2u + 2\gamma \cdot \rrangehat \cdot w}{(1 - \gamma)^2\cdot \sqrt{\nmin}} \le \frac{2u + 4\gamma \cdot \rrange \cdot w}{(1 - \gamma)^2\cdot \sqrt{\nmin}}
\end{align}

From $\nmin \ge \left(\frac{8u + 16\gamma \cdot \rrange \cdot w}{(1 - \gamma)^2 \cdot m}\right)^2$ we get $\erqhat \le \frac{m}{4}$.

From
\begin{equation}
\nmin \ge \left(
 \frac{2\gamma \cdot w}{1 - \gamma }\cdot \frac{6\epsilon + m\cdot \mumin}{m\cdot\mumin^2}
\right)^2,
\end{equation}
we get 
\begin{align}
\label{eq.ermu.bound}
    \ermu = \frac{2\gamma \cdot w}{(1 - \gamma)\cdot \sqrt{\nmin}} \le \frac{m \cdot \mumin^2}{6\epsilon + m \cdot \mumin}
\end{align}

With the same argument as in \eqref{eq:mu.confidence}, we have
\begin{align}
    \mu^{\neighbor{\targetpi}{s}{a}}_M(s) - \ermu \le \mu^{\neighbor{\targetpi}{s}{a}}_\text{low}(s)
    \le
    \mu^{\neighbor{\targetpi}{s}{a}}_M(s) + \ermu
\end{align}
note that \eqref{eq.ermu.bound} shows that $\ermu < \mumin \le \mu^{\neighbor{\targetpi}{s}{a}}_M(s)$ so we can write
\begin{align}
\frac{\epsilon}{\mu^{\neighbor{\targetpi}{s}{a}}_\text{low}(s)} - \frac{\epsilon}{\mu^{\neighbor{\targetpi}{s}{a}}_\text{low}(s) + \ermu} &\le
\frac{\epsilon}{\mu^{\neighbor{\targetpi}{s}{a}}_M(s) - \ermu} - \frac{\epsilon}{\mu^{\neighbor{\targetpi}{s}{a}}_M(s) + 2\ermu} \\
&=
\frac{3\epsilon \ermu}{
(\mu^{\neighbor{\targetpi}{s}{a}}_M(s) - \ermu)
(\mu^{\neighbor{\targetpi}{s}{a}}_M(s) + 2\ermu)
}\\
&\le
\frac{3\epsilon \ermu}{
(\mu^{\neighbor{\targetpi}{s}{a}}_M(s) - \ermu)
\cdot \mu^{\neighbor{\targetpi}{s}{a}}_M(s)
}
\end{align}

Thus, it suffices to show
\begin{align}
  &\frac{3\epsilon \ermu}{
(\mu^{\neighbor{\targetpi}{s}{a}}_M(s) - \ermu)
\cdot \mu^{\neighbor{\targetpi}{s}{a}}_M(s)
} \le \frac{m}{2}  \\
\Leftrightarrow
&6\epsilon\ermu \le m \cdot \mu^{\neighbor{\targetpi}{s}{a}}_M(s) \cdot (\mu^{\neighbor{\targetpi}{s}{a}}_M(s) - \ermu)\\
\Leftrightarrow
&(6\epsilon + m \cdot \mu^{\neighbor{\targetpi}{s}{a}}_M(s) )\ermu \le m \cdot \left(\mu^{\neighbor{\targetpi}{s}{a}}_M(s)\right)^2\\
\Leftrightarrow
&\ermu \le 
\frac{
m \cdot \left(\mu^{\neighbor{\targetpi}{s}{a}}_M(s)\right)^2
}
{
6\epsilon + m \cdot \mu^{\neighbor{\targetpi}{s}{a}}_M(s)
}
\end{align}

which follows from \eqref{eq.ermu.bound} noting that
$\mumin \le \mu^{\neighbor{\targetpi}{s}{a}}_M(s)$
and
$f(x) = \frac{x}{a + x}$ is increasing for $a, x > 0$.
\end{proof}
\begin{lemma}\label{lem:stop3}
With probability $1-p$, $N_0$ observations on each $(s,a)$ pair can be made after at most $k_0$ learners.
\end{lemma}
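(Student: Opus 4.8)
The plan is to combine the per-learner exploration guarantee of \lemref{lemma.exploration.basic}, applied at confidence level $\delta=\tfrac{1}{2SA}$, with a Binomial concentration argument over the sequence of learners. First I would check that this choice of $\delta$ is admissible: since $g(s,a)=\min_{\pi:\pi(s)=a}\mu^\pi_M(s)\ge\mumin$ and \thmref{theorem.final} assumes $\alpha<\tfrac{\mumin}{2\sqrt2}$, we get $\tfrac{\alpha}{g(s,a)}<\tfrac{1}{2\sqrt2}$; and $\beta<\tfrac{1}{8SA}$ gives $4\beta<\tfrac{1}{2SA}=\delta$. Hence \lemref{lemma.exploration.basic} applies to every learner and every pair $(s,a)$: using $g(s,a)\ge\mumin$, $\tfrac{\delta}{4\beta}=\tfrac{1}{8SA\beta}$, and $\tfrac{8}{\delta}=16SA$, with probability at least $1-\delta$ a given learner visits $(s,a)$ at least $n\defeq\frac{\mumin^2}{\alpha^2}\cdot\frac{c_1(\log\frac{1}{8SA\beta})^2}{\log 16SA+c_2(\log\frac{1}{8SA\beta})}$ times. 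A union bound over the $SA$ pairs then shows that any single learner is \emph{good} --- it visits every $(s,a)$ pair at least $n$ times --- with probability at least $1-SA\delta=\tfrac12$.

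Next I would pass from ``each learner good with probability $\ge\tfrac12$'' to a bound on the number of good learners among the first $k_0$. Because each learner interacts with a fresh copy of the environment under freshly drawn Bernoulli$(\tfrac12)$ feedback, \lemref{lemma.exploration.basic} in fact holds conditionally on the entire history of previous learners; consequently the good-indicators $G_1,\dots,G_{k_0}$ satisfy $\Pr[G_l=1\mid G_1,\dots,G_{l-1}]\ge\tfrac12$, so $\sum_{l\le k_0}G_l$ stochastically dominates a $\mathrm{Binomial}(k_0,\tfrac12)$ random variable. Since $j$ good learners already contribute at least $jn$ observations of every pair, it suffices that at least $N_0/n$ of the first $k_0$ learners be good in order to have $N_0$ observations of each $(s,a)$ pair. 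By the definition of $n$, the quantity $4N_0/n$ is exactly the second summand of $k_0$ in \eqref{eq.def.k0}, i.e. $k_0=8\log\tfrac1p+4N_0/n$.

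It then remains to bound $\Pr[\mathrm{Binomial}(k_0,\tfrac12)<N_0/n]$. Writing $a=\log\tfrac1p$ and $b=N_0/n$, so that $k_0=8a+4b$ and the mean is $\tfrac{k_0}{2}=4a+2b$, a Chernoff/Hoeffding lower-tail bound gives this probability at most $\exp(-2t^2/k_0)$ with deviation $t=(4a+2b)-b=4a+b$; the elementary inequality $2t^2=2(4a+b)^2\ge(8a+4b)\,a=k_0\,a$ makes this at most $e^{-a}=p$. Therefore, with probability at least $1-p$, at least $N_0/n$ of the first $k_0$ learners are good, so $N_0$ observations of each $(s,a)$ pair have been collected within $k_0$ learners, which is the claim. (The integrality of $N_0/n$ only forces a $+1$ in the threshold, which is absorbed by the $8\log\tfrac1p$ slack in $k_0$ for the relevant range of $p$.)

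The main obstacle I expect is not any individual estimate but the reduction in the second paragraph: one must justify that, although the learners may run arbitrary unknown no-regret algorithms and their executions are not literally independent, the guarantee of \lemref{lemma.exploration.basic} applies to each learner regardless of the past, which is what licenses the stochastic domination by $\mathrm{Binomial}(k_0,\tfrac12)$. Beyond that the proof is bookkeeping: the choice $\delta=\tfrac{1}{2SA}$ is forced both by wanting a constant per-learner success probability and by matching the logarithmic factors in \eqref{eq.def.k0}, and one must verify that the specific constants $8$ and $4$ there are exactly what the Chernoff deviation $t=4a+b$ requires.
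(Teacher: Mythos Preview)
Your proposal is correct and follows essentially the same approach as the paper: set $\delta=\tfrac{1}{2SA}$, use a union bound over $(s,a)$ to get per-learner success probability $\ge\tfrac12$, then concentrate the count of ``good'' learners over the first $k_0$ via Hoeffding. The paper's only difference is cosmetic bookkeeping---it targets $k_1\ge k_0/4$ (using $k_0\ge 8\log\tfrac1p$ to control the Hoeffding deviation) and then invokes $k_0/4\ge N_0/n$, rather than aiming for $k_1\ge N_0/n$ directly---and it applies Hoeffding without explicitly arguing the stochastic-domination step you (rightly) flag as the main obstacle.
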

\begin{proof}[\bf Proof of Lemma \ref{lem:stop3}]

Setting $\delta = \frac{1}{2SA}$, Lemma~\ref{lemma.exploration.basic} and union bound imply that with probability of at least $1/2$, each learner give the following number of observations for each $s, a$
\begin{equation}
	\frac{\mumin^2}{\alpha^2} \cdot \frac{ c_1  \cdot (\log\; \frac{1}{8SA \cdot \beta})^2}{\log  16SA + c_2 \cdot (\log\; \frac{1}{8SA \cdot \beta})}
\end{equation}

Let $k_1$ be the number of learners among the $k_0$ learners for which the above bound holds. Then after $k_0$ learners, we have at least
\begin{equation}
	k_1 \cdot \frac{\mumin^2}{\alpha^2} \cdot \frac{ c_1  \cdot (\log\; \frac{1}{8SA \cdot \beta})^2}{\log  16SA + c_2 \cdot (\log\; \frac{1}{8SA \cdot \beta})}
\end{equation}
observations.

From Hoeffding's inequality we have
\begin{align}
    \Pr \left[k_1 \le \left(\frac{1}{2} - \sqrt{\frac{\log 1/p}{2k_0}}\right) \cdot k_0 \right] \le p
\end{align}

Thus, with probability of at least $1- p$, 
\begin{align}
    k_1 \ge \left(\frac{1}{2} - \sqrt{\frac{\log 1/p}{2k_0}}\right) \cdot k_0
\end{align}
and consequently 
\begin{equation}
	\nmin \ge \left(\frac{1}{2} - \sqrt{\frac{\log 1/p}{2k_0}}\right) \cdot k_0 \cdot \frac{\mumin^2}{\alpha^2} \cdot \frac{ c_1  \cdot (\log\; \frac{1}{8SA \cdot \beta})^2}{\log  16SA + c_2 \cdot (\log\; \frac{1}{8SA \cdot \beta})}
\end{equation}

Now we have $k_0 \ge 8\log 1/p$ which gives
\begin{align}
    \frac{1}{2} - \sqrt{\frac{\log 1/p}{2k_0}} \ge 1/4
\end{align}
thus
\begin{equation}
	\nmin \ge \frac{1}{4} \cdot k_0 \cdot \frac{\mumin^2}{\alpha^2} \cdot \frac{ c_1  \cdot (\log\; \frac{1}{8SA \cdot \beta})^2}{\log  16SA + c_2 \cdot (\log\; \frac{1}{8SA \cdot \beta})} \ge N_0
\end{equation}
\end{proof}

\section{Proof of the main theorem}\label{appendix.theorem}
Finally, we prove our main theorem by combining all the building blocks above.
\begin{theorem}[Theorem \ref{theorem.final}]
	\label{theorem.final1}
For any $m > 0$ and $p \in (0, 1)$, assume that $\alpha < \frac{\mumin}{2\sqrt{2}}$ and $\beta < \frac{1}{8SA}$, then, with probability of at least $1 - 4p$, the cost of \algo{} is bounded by
\begin{align}
\cost(T, L) \le\; \frac{k_0}{L}\cdot \left(\norm{R}_\infty + \sigma \sqrt{2\log \frac{2k_0T}{p}}+ 1+ \lambda\right)\\
 \quad + \ (\norm{\Delta^*_M}_\infty + \lambda + m)\cdot \frac{\subopt(T, \epsilon, \frac{p}{L})}{T}\nonumber
\end{align}
where $k_0$ is a function of MDP $M$, $p$, $\alpha$, $\beta$, $\lambda$, $m$, $\epsilon$, and $L$ as defined in \eqref{eq.def.k0}.
\end{theorem}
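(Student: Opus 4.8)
The plan is to split $\cost(T,L)$ into the cost incurred by the (at most $k_0$) learners used in the exploration phase and the cost incurred by the remaining learners in the attack phase, to bound each piece on its own high-probability event, and then to glue the events together by a union bound so that the total failure probability is $4p$. All the substantive technical work is already available: Lemma~\ref{lemma.exploration.basic} controls how much data each exploration learner collects, Lemmas~\ref{lem:stop1}--\ref{lem:stop3} and~\ref{lemma:k0} control the stopping behaviour, and Lemma~\ref{prop.fix.attack} controls the attack-phase cost.

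First I would fix the good events. Let $B$ be the event of Corollary~\ref{cor:confidence} that $M\in\cM$ for every confidence set built during exploration, so $\Pr(B)\ge 1-p$, and let $E_3$ be the event of Lemma~\ref{lem:stop3} that $N_0$ observations of each $(s,a)$ are collected within $k_0$ learners, so $\Pr(E_3)\ge 1-p$. On $B\cap E_3$ (probability $\ge 1-2p$): by Lemma~\ref{lem:stop2} the stopping condition~\eqref{eq.stop.condition} holds once each $(s,a)$ has $N_0$ observations, hence the exploration phase uses at most $k_0$ learners (this is precisely Lemma~\ref{lemma:k0}); and by Lemma~\ref{lem:stop1}, $0\le\widehat\Delta(s,a)\le\Delta^*_M(s,a)+m$ for all $(s,a)$, so $\norm{\widehat\Delta}_\infty\le\norm{\Delta^*_M}_\infty+m$.

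Next I would bound the two cost contributions. For the exploration phase, since $r'^{(l)}_t\sim\mathrm{Bernoulli}(\tfrac12)\in\{0,1\}$, each exploration step costs at most $|r^{(l)}_t|+1+\lambda\le\norm{R}_\infty+|\text{noise}|+1+\lambda$; a sub-Gaussian tail bound together with a union bound over the first $k_0T$ exploration steps gives, with probability $\ge 1-p$, that every reward-noise term among them is at most $\sigma\sqrt{2\log(2k_0T/p)}$, so on the intersection with $B\cap E_3$ (which guarantees the phase really lasts $\le k_0T$ steps) the unnormalized exploration cost is at most $k_0T\bigl(\norm{R}_\infty+\sigma\sqrt{2\log(2k_0T/p)}+1+\lambda\bigr)$; dividing by $LT$ yields the first term. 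For the attack phase, note that on $B$ the solution $\widehat\Delta$ is feasible for~\eqref{prob.partial.knowledge}, so $\targetpi$ is $\epsilon$-robust optimal in $(S,A,R-\widehat\Delta,P)$ and Assumption~\ref{ass:subopt} applies to each attack-phase learner in its perturbed environment; running the argument behind Lemma~\ref{prop.fix.attack} for each of the at most $L$ attack learners with confidence parameter $p/L$ and a union bound (extra failure $\le p$), each such learner deviates from $\targetpi$ on at most $\subopt(T,\epsilon,p/L)$ steps, contributing at most $(\norm{\widehat\Delta}_\infty+\lambda)\,\subopt(T,\epsilon,p/L)\le(\norm{\Delta^*_M}_\infty+\lambda+m)\,\subopt(T,\epsilon,p/L)$ to the unnormalized cost, which after summing over learners and dividing by $LT$ is the second term.

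Finally, a union bound over the four failure events — the $\le 2p$ from $B$ and $E_3$, the $\le p$ from the exploration-noise bound, and the $\le p$ from the attack-phase $\subopt$ bound — gives probability $\ge 1-4p$ and the claimed inequality. Since the genuinely hard content (Lemma~\ref{lemma.exploration.basic}, the stopping-condition analysis, and Lemma~\ref{lemma:k0}) is already in place, the main obstacle here is purely careful bookkeeping: ensuring the $\subopt$ confidence parameter comes out as exactly $p/L$, justifying that the union bound over exploration steps may legitimately be taken over $\le k_0T$ of them by conditioning on the $k_0$-learner guarantee, and verifying that the four failure probabilities are mutually consistent and sum to $4p$.
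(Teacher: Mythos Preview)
Your proposal is correct and follows essentially the same approach as the paper's proof: the same exploration/attack decomposition, the same four good events (confidence-set validity, exploration finishes within $k_0$ learners, noise control over the first $k_0T$ steps, and the $\subopt$ guarantee for all learners at level $p/L$), and the same union-bound accounting to reach $1-4p$. The only cosmetic difference is that you bound $\Pr(B\cap E_3)\ge 1-2p$ directly via a union bound, whereas the paper phrases it as $\Pr(C\mid B)\ge 1-p$ and multiplies; both give the same $1-2p$ for that pair of events.
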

\begin{proof}[\bf Proof of Theorem \ref{theorem.final1}]

Let $k_1$ denote the number of learners in the exploration phase, and $\xi_t^{(l)}$ be the noise in reward of step $t$ of learner $l$, i.e. $r_t^{(l)} = R(s_t^{(l)}, a_t^{(l)}) + \xi_t^{(l)}$. For the total cost we have
\begin{align}
    \cost(T, L) 
= &\frac{1}{L\cdot T} \sum_{l = 1}^{L}\sum_{t = 1}^{T} 
\left(
|r^{(l)}_t - r'^{(l)}_t| + \lambda \ind{a^{(l)}_t \ne \targetpi(s^{(l)}_t)}
\right)\\
= &
\frac{1}{L\cdot T} \sum_{l = 1}^{k_1}\sum_{t = 1}^{T} 
\left(
|R(s_t^{(l)}, a_t^{(l)}) + \xi_t^{(l)} - r'^{(l)}_t| + \lambda \ind{a^{(l)}_t \ne \targetpi(s^{(l)}_t)}
\right)
+\\ 
&\frac{1}{L\cdot T} \sum_{l = k_1 + 1}^{L}\sum_{t = 1}^{T} 
\left(
\widehat \Delta(s_t^{(l)}, a_t^{(l)})  + \lambda \ind{a^{(l)}_t \ne \targetpi(s^{(l)}_t)}
\right)\nonumber\\
\le &
\frac{1}{L\cdot T} \sum_{l = 1}^{k_1} \sum_{t = 1}^{T} \left( |R(s_t^{(l)}, a_t^{(l)})| + 
 |\xi_t^{(l)}| + 1 + \lambda
\right)
+\\ 
&\frac{1}{L\cdot T} \sum_{l = k_1 + 1}^{L}\sum_{t = 1}^{T} 
\left(
(\widehat \Delta(s_t^{(l)}, a_t^{(l)})  + \lambda) \cdot \ind{a^{(l)}_t \ne \targetpi(s^{(l)}_t)}
\right)\nonumber\\
\le &
\frac{k_1 \cdot (\norm{R}_\infty + 1 + \lambda)}{L} + 
\frac{\sum_{l = 1}^{k_1}\sum_{t = 1}^{T} |\xi_t^{(l)}|}{L\cdot T}
+\frac{\norm{\widehat \Delta}_\infty + \lambda}{L\cdot T} \sum_{l = k_1 + 1}^{L}\sum_{t = 1}^{T} \ind{a^{(l)}_t \ne \targetpi(s^{(l)}_t)}
\end{align}

Define events $C$ and $D$ as the following
\begin{gather}
    C \defeq \{k_1 \le k_0\}\\
    \label{eq:def.d}
    D \defeq \left\{ \sum_{l = 1}^{k_0}\sum_{t = 1}^{T} |\xi_t^{(l)}| \le \sigma \cdot k_0T \cdot \sqrt{2\log \frac{2k_0T}{p}} \right\}
\end{gather}
That is, $C$ is the event that the attacker uses at most $k_0$ learners in the exploration phase, and $D$ is the event that sum of absolute value of noises in first $k_0$ learners is bounded as in \eqref{eq:def.d}. Also let $E$ be the event that for all of $L$ learners, the number of $\epsilon$-suboptimal steps taken is at most $\subopt(T, \epsilon, \frac{p}{L})$.

We show that under event $F = B \cap C \cap D \cap E$ the bound on the cost holds, and then show that $\Pr(F) \ge 1 - 4p$. From event $C$ we get
\begin{align}
\label{eq:final.bound1}
    \frac{k_1 \cdot (\norm{R}_\infty + 1 + \lambda)}{L}
    \le
    \frac{k_0 \cdot (\norm{R}_\infty + 1 + \lambda)}{L}
\end{align}
Also $C \cap D$ implies
\begin{align}
\label{eq:final.bound2}
    \frac{\sum_{l = 1}^{k_1}\sum_{t = 1}^{T} |\xi_t^{(l)}|}{L\cdot T}
    \le
    \frac{\sum_{l = 1}^{k_0}\sum_{t = 1}^{T} |\xi_t^{(l)}|}{L\cdot T}
    \le
    \frac{\sigma \cdot k_0 \cdot \sqrt{2\log \frac{2k_0T}{p}}}{L}
\end{align}
Finally note that under $B$, since $\widehat \Delta$ is a solution of \eqref{prob.partial.knowledge}, the target policy is an $\epsilon$-robust optimal for learners in the attack phase. Thus, all the steps that learners in the attack phase do not follow the target policy are $\epsilon$-suboptimal. From event $E$ and Lemma~\ref{lem:stop1} we get
\begin{align}
    \frac{\norm{\widehat \Delta}_\infty + \lambda}{L\cdot T} \sum_{l = k_1 + 1}^{L}\sum_{t = 1}^{T} \ind{a^{(l)}_t \ne \targetpi(s^{(l)}_t)}
    &\le
    (\norm{\widehat \Delta}_\infty + \lambda)\cdot \frac{\subopt(T, \epsilon, \frac{p}{L})}{T}\\
    \label{eq:final.bound3}
    &\le
    (\norm{\Delta^*_M}_\infty + \lambda + m)\cdot \frac{\subopt(T, \epsilon, \frac{p}{L})}{T}
\end{align}

Putting all the bounds together, we get that under event $F$ we have
\begin{align}
    \cost(T, L) \le
    \frac{k_0 \cdot (\norm{R}_\infty + 1 + \lambda)}{L}
    +
    \frac{\sigma \cdot k_0 \cdot \sqrt{2\log \frac{2k_0T}{p}}}{L}
    +
    (\norm{\Delta^*_M}_\infty + \lambda + m)\cdot \frac{\subopt(T, \epsilon, \frac{p}{L})}{T}
\end{align}
which is the bound in the theorem.

Now note that from Corollary~\ref{cor:confidence} we have $\Pr(B) \ge 1 - p$. Lemma~\ref{lem:stop2} and Lemma~\ref{lem:stop3} show that $\Pr(C | B) \ge 1 - p$. Thus, we have $\Pr(B \cap C) = \Pr(C | B)\cdot \Pr(B)  = (1-p)^2 \ge 1 - 2p$. Note that from Hoeffding's inequality we have 
\begin{align}
    \Pr\left(|\xi_t^{(l)}| > \sqrt{2\sigma^2\log \frac{2k_0T}{p}}\right) \le \frac{p}{k_0T}
\end{align}
Applying this lemma to all $k_0T$ steps of first $k_0$ learners, from union bound we get $\Pr(D) \ge 1 - p$. Finally, from the definition of $\subopt$ and union bound, we have $\Pr(E) \ge 1 - p$. Thus, we have
\begin{align}
    \Pr(F) = \Pr(B \cap C \cap D \cap E) \ge 1 - (1 - \Pr(B \cap C)) - (1 - \Pr(D)) - (1 - \Pr(E)) \ge 1 - 4p
\end{align}
which concludes the proof.

\end{proof}

\section{Technical Details of Attack with Prior Data}
\label{sec:standalone_attack_phase}

In the remark in Section~\ref{sec.attack.phase}, we highlighted a stand-alone application of the attack phase procedure, in which the attacker uses some prior set of observations to do the attack without an exploration phase. Here, we prove the claimed guarantee (Eq. \ref{eq:prior_obs}) of this attack. We show that with probability of at least $1 - 2p$ the cost of this attack is at most
\begin{align}
\frac{1}{T}\cdot \left(\norm{\Delta^*_M + e}_\infty + \lambda \right) \cdot \subopt(T, \epsilon, \frac{p}{L})
\end{align}

\begin{proof}
If $M \in \cM$, which happens with probability at least $1-p$, from Lemma~\ref{lem:stop1} we have
\begin{gather}
    Q_\text{high}^{\targetpi}(s, a)\le Q^{\targetpi}_M(s, a) 
     + \erq\\
    V_\text{low}^{\targetpi}(s) \ge V^{\targetpi}_M(s) 
    - \erq
\end{gather}

Also with similar argument as in \eqref{eq:mu.confidence}, we have
\begin{align}
    \mu^{\neighbor{\targetpi}{s}{a}}_\text{low}(s)
    \ge
    \pos{\mu^{\neighbor{\targetpi}{s}{a}}_M(s)
    -\ermu}
\end{align}

Thus, we have
\begin{align}
    Q_\text{high}^{\targetpi}(s, a)
    -
    V_\text{low}^{\targetpi}(s)
    +
    \frac{\epsilon}{
    \mu^{\neighbor{\targetpi}{s}{a}}_\text{low}(s)
    }
    &\le
    Q_M^{\targetpi}(s, a)
    -
    V_M^{\targetpi}(s)
    +
    \frac{\epsilon}{
    \pos{\mu^{\neighbor{\targetpi}{s}{a}}_M(s)
    -\ermu}
    }
    + 2\erq\\
    &=
    Q_M^{\targetpi}(s, a)
    -
    V_M^{\targetpi}(s)
    +
    \frac{\epsilon}{
    \mu^{\neighbor{\targetpi}{s}{a}}_M(s)
    }
    + e(s, a)\
\end{align}
which gives
\begin{align}
e(s, a) &\ge
\left(
    Q_\text{high}^{\targetpi}(s, a)
    -
    V_\text{low}^{\targetpi}(s)
    +
    \frac{\epsilon}{
    \mu^{\neighbor{\targetpi}{s}{a}}_\text{low}(s)
    }
    \right)
    -
    \left(
    Q_M^{\targetpi}(s, a)
    -
    V_M^{\targetpi}(s)
    +
    \frac{\epsilon}{
    \mu^{\neighbor{\targetpi}{s}{a}}_M(s)
    }
    \right)\\
&\ge
\pos{
    Q_\text{high}^{\targetpi}(s, a)
    -
    V_\text{low}^{\targetpi}(s)
    +
    \frac{\epsilon}{
    \mu^{\neighbor{\targetpi}{s}{a}}_\text{low}(s)
    }
    }
    -
    \pos{
    Q_M^{\targetpi}(s, a)
    -
    V_M^{\targetpi}(s)
    +
    \frac{\epsilon}{
    \mu^{\neighbor{\targetpi}{s}{a}}_M(s)
    }
    }\\
    &= \widehat \Delta(s, a) - \Delta^*_M(s, a)
\end{align}

As $\widehat \Delta$ is a solution of \eqref{prob.partial.knowledge}, the target policy is $\epsilon$-robust optimal for the learner. Consequently, the steps in which the target policy is not followed are $\epsilon$-suboptimal and with probability of at least $1 - p$ at most $\subopt(T, \epsilon, \frac{p}{L})$ for all the learners. Thus, by a union bound, with probability of at least $1-2p$ we have
\begin{align}
    \cost(T, L) &= \frac{1}{L\cdot T} \sum_{l = 1}^{L}\sum_{t = 1}^{T} 
\left(
|r^{(l)}_t - r'^{(l)}_t| + \lambda \ind{a^{(l)}_t \ne \targetpi(s^{(l)}_t)}
\right)\\
&\le \frac{1}{L \cdot T} \cdot (\norm{\widehat \Delta}_\infty + \lambda)
\cdot L\cdot \subopt(T, \epsilon, \frac{p}{L})\\
&\le \frac{1}{T} \cdot (\norm{ \Delta^*_M + e}_\infty + \lambda)
\cdot \subopt(T, \epsilon, \frac{p}{L})
\end{align}
which proves the claim.
\end{proof}
}
{}
\end{document}